\documentclass[lettersize,journal]{IEEEtran}
\usepackage{amsmath,amssymb,amsfonts}
\usepackage{algorithm}
\usepackage{algpseudocode}
\usepackage{array}
\usepackage[caption=false,font=normalsize,labelfont=sf,textfont=sf]{subfig}
\usepackage{textcomp}
\usepackage{stfloats}
\usepackage{url}
\usepackage{verbatim}
\usepackage{graphicx}
\usepackage{lettrine}
\usepackage{epsfig}
\usepackage{cite}
\usepackage{hyperref}
\usepackage{color}
\usepackage{mathtools}
\usepackage{bm}
\usepackage{xcolor}
\usepackage{tikz}
\usetikzlibrary{arrows.meta, positioning, calc}
\usepackage[percent]{overpic}
\usetikzlibrary{positioning, arrows.meta}
\usepackage{xcolor}
\newtheorem{lemma}{\textbf{Lemma}}
\newtheorem{theorem}{\textbf{Theorem}}

\begin{document}

\title{Attitude Estimation Using Inertial and Barometric Measurements}

\author{Méloné Nyoba Tchonkeu$^{1}$, Soulaimane Berkane $^{2}$, \textit{Senior Member, IEEE}, and Tarek Hamel$^{3}$, \textit{Fellow Member, IEEE}

\thanks{$^{1}$M. Nyoba Tchonkeu is with the Department of Computer Science and Engineering, University of Quebec in Outaouais, 
        Gatineau, QC J8X3X7, Canada
        {\tt\small (nyom01@uqo.ca)}}%
\thanks{$^{2}$S. Berkane is with the Department of Computer Science and Engineering, University of Quebec in Outaouais, Gatineau, QC J8X3X7, and also with the Department of Electrical Engineering, Lakehead University,
        Thunder Bay, ON P7B 5E1, Canada
        {\tt\small (soulaimane.berkane@uqo.ca)}}%
\thanks{$^{3}$T. Hamel is with I3S-UniCA-CNRS, University Cote d’Azur and the Insitut Universitaire de France,
        06903 Sophia Antipolis, France
        {\tt\small (thamel@i3s.unice.fr)}}%
}

\maketitle

\begin{abstract}
Accurate and robust attitude estimation is a key challenge for autonomous vehicles, particularly in GNSS-denied conditions and during highly accelerated flight. In such conditions, Inertial Measurement Units (IMUs) alone are insufficient for reliable tilt estimation due to the ambiguity between gravitational and inertial accelerations. Although auxiliary velocity sensors such as GNSS, Pitot tubes, Doppler radar, or Visual Inertial Odometry are commonly used, they may be unavailable, intermittent, or costly.

This paper introduces a barometer-aided attitude estimation architecture that exploits barometric altitude measurements to provide complementary information on the vehicle's vertical motion, thereby enhancing attitude estimation within nonlinear observers on $\mathrm{SO}(3)$. The contributions are twofold. First, we design a deterministic Riccati observer cascaded with a complementary filter, ensuring almost-global asymptotic stability (AGAS) under a uniform observability (UO) condition while preserving the geometric structure of the attitude dynamics. Second, we propose a nonlinear observer evolving on $\mathrm{SO}(3)\times\mathbb{R}^2$, which integrates IMU measurements as inputs and barometer and magnetometer measurements as outputs within a unified framework, guaranteeing local exponential stability (LES) under relaxed uniform observability conditions.

The proposed approaches are validated using both simulated and real flight data. The results demonstrate that barometer-aided estimation provides a lightweight, reliable, and effective complementary sensing modality for attitude estimation in minimal-sensing configurations, offering a practical alternative when conventional velocity measurements are unavailable or degraded.
\end{abstract}

\begin{IEEEkeywords}
Attitude estimation, nonlinear observers, stability analysis, unmanned aerial vehicles, sensor fusion.
\end{IEEEkeywords}

\section{Introduction}
\lettrine{A}ccurate attitude estimation for autonomous vehicles operating in GNSS-denied conditions or undergoing significant linear accelerations remains a central challenge in inertial navigation. The Inertial Measurement Unit (IMU), which provides angular velocity and specific acceleration measurements, constitutes the primary sensing backbone of most attitude estimation systems. In many estimation frameworks (\textit{e.g.,}~\cite{mahony2008nonlinear,hua2013implementation,grip2011attitude}), particularly those targeting low-cost UAVs and robotic platforms, accelerometer measurements are used as proxies for gravity under the assumption of negligible linear accelerations. However, accelerometers do not measure gravity directly; rather, they measure specific acceleration, which includes all non-gravitational accelerations acting on the vehicle. This approximation is therefore valid only under quasi-static conditions, where gravity dominates the measured acceleration. During aggressive maneuvers, legged locomotion, or in the presence of strong wind disturbances, the accelerometer signal can significantly deviate from the gravity direction, leading to ambiguity in tilt estimation. To overcome this limitation, modern attitude estimation architectures increasingly incorporate complementary sensing modalities to improve estimation and robustness.

Several authors have addressed this issue by incorporating linear velocity information, either in the body-fixed frame or in the inertial frame. This class of approaches, commonly referred to as the \emph{velocity-aided attitude} (VAA) problem, has attracted considerable attention in recent years \cite{hua2010attitude,roberts2011attitude,berkane2017attitude}. Early solutions to the body-frame VAA problem relied on linearisation, e.g., \cite{2008_bonnabel_SymmetryPreservingObservers,2008_martin_InvariantObserverEarthVelocityAided,mourikis2007multi,bloesch2015robust}, whereas later approaches adopted more constructive designs and, in some cases, provided guarantees of almost-global asymptotic stability \cite{2013_troni_PreliminaryExperimentalEvaluation,2016_hua_StabilityAnalysisVelocityaided,hansen2017nonlinear,wang2021nonlinear,benallegue2023velocity,Pieter2023,bouazza2025observer}.
Although robust and theoretically grounded, these architectures generally assume full measurement of either body or inertial velocity, which limits their practical deployment when only partial velocity information is available or when measurements are unreliable. This limitation has received little theoretical attention. A notable exception is the work of Oliveira \textit{et al.} \cite{oliveira2024pitot} that addresses tilt and air velocity estimation for fixed-wing UAVs in GNSS-denied conditions by exploiting only a component of the air velocity vector in the body fixed frame provided by a \textit{single-axis Pitot tube}  and IMU data within a Riccati observer framework on $\mathrm{SO}(3) \times \mathbb{R}^3$ and guarantees local asymptotic convergence. Due to the limited sensing configuration, the system exhibits a structural yaw ambiguity and requires sufficiently rich persistent excitation to ensure observability. To alleviate these limitations, subsequent approaches incorporate additional information, such as zero side-slip angle pseudo-measurements and magnetometer measurement within cascade structures in which a deterministic observer is combined with a complementary filter on $\mathrm{SO}(3)$, ensuring almost global asymptotic stability\cite{tchonkeu2026pitot}. While relevant, these approaches remain sensitive to uncertainties in Pitot tube measurements.

Unlike Pitot tubes, barometers are inexpensive, lightweight, widely available, and largely insensitive to airflow disturbances. Yet, their potential as attitude-aiding sensors remains largely unexplored. The key observation underlying this work is that barometric altitude measurements provide information about the vehicle's motion along the gravity direction and can therefore be exploited to enhance attitude estimation even in the absence of conventional velocity measurements.

Building on this observation, we develop two complementary barometer-aided attitude estimation architectures within geometric observer frameworks. The first architecture adopts a cascade design in which a deterministic Riccati observer estimates the vertical motion and tilt information from inertial and barometric measurements, while a nonlinear observer on $\mathrm{SO}(3)$ exploits magnetometer measurements to recover the full attitude and guarantee almost-global asymptotic stability. The second architecture exploits the geometric Riccati observer framework of \cite{hamel2017riccati} by directly integrating IMU, barometric, and magnetometer measurements within a unified reduced-order observer evolving on $\mathrm{SO}(3)\times\mathbb{R}^2$. The resulting design guarantees local exponential stability under weaker observability requirements.

Beyond the observer design itself, the paper provides a theoretical characterization of the observability properties of barometer-aided attitude estimation and establishes rigorous convergence guarantees for both architectures. The resulting framework reveals an interesting tradeoff between almost-global convergence and local exponential convergence, while offering a practical and lightweight alternative to conventional velocity-aided approaches. Extensive simulations and real-flight experiments corroborate the theoretical findings and demonstrate the effectiveness of barometric sensing as a complementary source of attitude information in GNSS-degraded environments.

This paper extends the preliminary results of \cite{tchonkeu2025barometer} by introducing a complementary barometer-aided attitude observer with local exponential stability (LES) guarantees and providing a unified comparison with the previously proposed almost-global asymptotic stability (AGAS) observer.

The remainder of the paper is structured as follows. Section~\ref{sec:prelims} reviews the required preliminary material and introduces uniform observability definitions. Section~\ref{sec:prob_desc} describes the problem and the objective of this work. Section \ref{sec:obs_design} presents the proposed one-stage and two-stage observer designs, establishes their observability and stability properties, and details their discrete-time implementation algorithms. Section \ref{sec:sim_exp} reports and discusses the simulation and experimental results. Finally, concluding remarks are provided in Section \ref{sec:concl}.

\section{Preliminary Material} \label{sec:prelims}
We denote by \( \mathbb{R} \) and \( \mathbb{R}_+ \) the sets of real and nonnegative real numbers, respectively. The \( n \)-dimensional Euclidean space is denoted by \( \mathbb{R}^n \). The Euclidean inner product of two vectors \( a, b \in \mathbb{R}^n \) is defined as \( \langle a, b \rangle = a^\top b \). The associated Euclidean norm of a vector \( a \in \mathbb{R}^n \) is \( |a| = \sqrt{a^\top a} \). Furthermore, we denote by \( \mathbb{R}^{m \times n} \) the set of real \( m \times n \) matrices. The set of \( n \times n \) positive definite matrices is denoted by \( \mathcal{S}^+(n) \), and the identity matrix is denoted by \( I_n \in \mathbb{R}^{n \times n} \). Given two matrices \( X, Y \in \mathbb{R}^{m \times n} \), the Euclidean matrix inner product is defined as \( \langle X, Y \rangle = \mathrm{tr}(X^\top Y) \), and the Frobenius norm of \( X \in \mathbb{R}^{n \times n} \) is given by \( \|X\| = \sqrt{\langle X, X \rangle} \). 
The unit sphere \( \mathbb{S}^{n-1} := \{ \eta \in \mathbb{R}^n \mid |\eta| = 1 \} \subset \mathbb{R}^n \) denotes the set of unit  vectors and forms a smooth submanifold of \( \mathbb{R}^n \). 

For any vector $\eta\in\mathbb{S}^{n-1}$, let
\[
\Pi_{\eta}:=I_n-\eta\eta^\top
\]
denote the orthogonal projection onto the plane normal to $\eta$.
For any vector $\gamma\in\mathbb{R}^n$, define
\[
\bar{\Pi}_{\gamma}:=|\gamma|^2I_n-\gamma\gamma^\top.
\]
Notice that
\(
\bar{\Pi}_{\gamma}=|\gamma|^2\Pi_{\gamma/|\gamma|}
\)
whenever $\gamma\neq0$, while $\bar{\Pi}_{0}=0_{n\times n}$.
By $\mathrm{diag(\cdot)}$, we denote the block diagonal matrix.
We denote by \( \mathcal{I}\) the inertial frame and by \( \mathcal{B}\) the body-fixed frame rigidly attached to the vehicle at the IMU location. The inertial frame is chosen as the North-East-Down (NED). 
Let $(\mathrm{e}_1,\mathrm{e}_2,\mathrm{e}_3)$ and $(\mathrm{e}_1^{\mathcal{B}},\mathrm{e}_2^{\mathcal{B}},\mathrm{e}_3^{\mathcal{B}})$ denote the canonical basis vectors of  \( \mathcal{I}\) and  \( \mathcal{B}\), respectively. The vectors \( \mathrm{e}_1 = \begin{bmatrix}1  & 0 & 0 \end{bmatrix}^\top \in \mathbb{S}^2 \) and \( \mathrm{e}_2 = \begin{bmatrix}0  & 1 & 0 \end{bmatrix}^\top \in \mathbb{S}^2 \) span the horizontal plane, while \( \mathrm{e}_3 = \begin{bmatrix}0  & 0 & 1 \end{bmatrix}^\top \in \mathbb{S}^2 \) is aligned with the gravity direction. Similarly, $e_1^{\mathcal{B}},e_2^{\mathcal{B}}$ span the horizontal plane of \( \mathcal{B}\), and $e_3^{\mathcal{B}}$ denotes its vertical axis.

The special orthogonal group of 3D rotations is denoted by
$
\mathrm{SO}(3) := \{ R \in \mathbb{R}^{3 \times 3} \mid RR^\top = R^\top R = I_3,\ \det(R) = 1 \}.
$
The Lie algebra of $\mathrm{SO}(3)$ is 
$
\mathfrak{so}(3) := \{\, \Omega \in \mathbb{R}^{3\times 3} \mid \Omega^\top = -\Omega \,\},
$
isomorphic to $\mathbb{R}^3$ via the skew-symmetric operator 
$(\cdot)^\times : \mathbb{R}^3 \to \mathfrak{so}(3)$, defined such that 
$
x \times y = x^\times y$ for all $x,y \in \mathbb{R}^3.$
The exponential map \( \exp : \mathfrak{so}(3) \rightarrow \mathrm{SO}(3) \) defines a local diffeomorphism from a neighborhood of \( 0 \in \mathfrak{so}(3) \) to a neighborhood of \( I_3 \in \mathrm{SO}(3) \). This induces the mapping of \( \exp \circ (\cdot)^\times : \mathbb{R}^3 \rightarrow \mathrm{SO}(3) \), defined via Rodrigues' formula \cite{Ma2004rodriguesformula}:
\begin{equation}
\exp([\theta]^{^\times}) 
= I_3 - \frac{\sin(|\theta|)}{|\theta|}[\theta]^{^\times}
+ \frac{1-\cos(|\theta|)}{|\theta|^2}([\theta]^{^\times})^2.
\label{eq:rodriguesformula}
\end{equation}

Consider the linear time-varying (LTV) system given by
\begin{equation}
\begin{cases}
\dot{x} = A(t)x + B(t)u, \\
y = C(t)x,
\end{cases}
\label{eq:LTV_system}
\end{equation}
with state \( x \in \mathbb{R}^n \), input \( u \in \mathbb{R}^\ell \), and output \( y \in \mathbb{R}^m \). The matrix-valued functions \( A(t) \), \( B(t) \), and \( C(t) \) are assumed to be continuous and bounded. 
Let the \emph{observability Gramian} of the system be denoted by
\begin{equation}
W(t, t+\tau) := \frac{1}{\tau} \int_t^{t+\tau} 
\Phi^\top(s, t) \, C^\top(s) \, C(s) \, \Phi(s, t) \, ds, \label{eq:W}
\end{equation}
where $\tau > 0$ and \( \Phi(s, t) \) is the state transition matrix such that
\begin{equation}
\frac{d}{dt} \Phi(s, t) = A(t)\Phi(s, t), 
\quad \Phi(t, t) = I_n, \quad \forall s \geq t. \label{eq:transition_matrix} 
\end{equation}
By definition from~\cite{Besancon2007}, the system ~\eqref{eq:LTV_system} or pair \( (A(t), C(t)) \) is \emph{uniformly observable} if there exist constants 
\( \delta, \mu > 0 \) such that, for all \( t \geq 0 \),
\begin{equation}
W(t, t+\delta) \geq \mu I_n. \label{eq:observability_gramian}
\end{equation}

\section{Problem Description}\label{sec:prob_desc}
\begin{figure}[ht]
  \centering
  \begin{overpic}[width=0.5\linewidth]{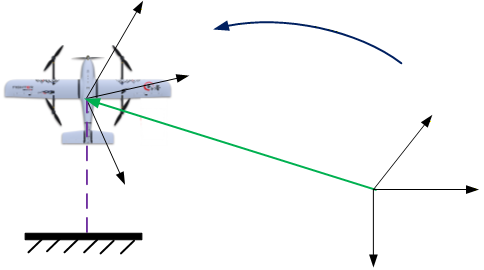}
    \put(12,37){\small \(\left\{\mathcal{B}\right\}\)}
    \put(23,55){\small \(\mathrm{e}_1^{\mathcal{B}}\)}
    \put(36,42){\small \(\mathrm{e}_2^{\mathcal{B}}\)}
    \put(26,19){\small \(\mathrm{e}_3^{\mathcal{B}}\)}
    \put(78,12){\small \(\left\{\mathcal{I}\right\}\)}
    \put(90,30){\small \(\mathrm{e}_1\)}
    \put(95,20){\small \(\mathrm{e}_2\)}
    \put(79,2){\small \(\mathrm{e}_3\)}
    \put(50,28){\small \(p\)}
    \put(66,52){\small \(R^\top\)}
    \put(14,17){\small \(h\)}
  \end{overpic}
  \caption{We assumed a vehicle equipped with an IMU and a barometer measuring the height $h$.}
  \label{fig:veh_baro_imu}
\end{figure}

Let \( R \in \mathrm{SO}(3) \) denote the rotation matrix describing the orientation of the
body-attached frame \( \mathcal{B} \) with respect to the inertial frame \( \mathcal{I} \). The position and linear velocity of the rigid body, expressed in the inertial frame \( \mathcal{I} \), are denoted by \( p \in \mathbb{R}^3 \) and \( v \in \mathbb{R}^3 \), respectively.
The IMU components provide body-fixed measurements of the angular velocity \( \mathrm{\omega} \in \mathbb{R}^3 \) and the linear specific acceleration \( \mathrm{a} \in \mathbb{R}^3 \). The vehicle dynamics are described by the second-order translational model and the first-order attitude kinematics
\begin{align}
\dot{p} &= v, \label{eq:position_dyn} \\
\dot{v} &= R\mathrm{a} + \mathrm{g}, \label{eq:velocity_dyn} \\
\dot{R} &= R\mathrm{\omega}^\times,
\label{eq:attitude_dyn}
\end{align}
where \( \mathrm{g} = g \mathrm{e}_3 \in \mathbb{R}^3 \) is the gravitational acceleration expressed in the inertial frame and \( g \approx 9.81\, \text{m/s}^2 \) is the gravity constant. To enhance observability of the vertical motion, we define the altitude as 
$h := \mathrm{e}_3^\top p$. The barometric sensor output is modeled as
\begin{equation}
y_b = h + n_b,
\label{eq:baro_measurement}
\end{equation}
where $n_b \sim \mathcal{N}(0,\sigma_b^2)$ is zero-mean noise. 
This scalar measurement provides partial information on the inertial position and, through its time derivatives, can also contribute to constraining the gravity direction for tilt estimation, particularly when GNSS velocity data are unavailable. 
In addition, we assume available a magnetometer that provides measurements of the Earth's magnetic field,
\begin{equation}
y_m := \mathrm{m}_{\mathcal{B}} = R^\top \mathrm{m}_{\mathcal{I}}+n_m,
\label{eq:mag_measurement}
\end{equation}
where $\mathrm{m}_{\mathcal{I}} \in \mathbb{S}^2$ is the known magnetic field vector expressed in the inertial frame and $n_m \sim \mathcal{N}(0,\sigma_m^2) \in \mathbb{R}^3$ is zero-mean noise.
In summary, the vehicle is modeled by \eqref{eq:position_dyn}--\eqref{eq:attitude_dyn} and equipped with an IMU, a barometer, and a magnetometer. The objective is to design nonlinear observers that estimate the attitude $R \in \mathrm{SO}(3)$ by exploiting the complementary information provided by barometric altitude measurements.
\section{Proposed Observer Design}\label{sec:obs_design}
This section presents two barometer-aided attitude estimation architectures.
The superscripts $(G)$ and $(L)$ denote quantities associated with the
two-stage observer, which guarantees AGAS, and the unified observer, which guarantees LES, respectively.
The observer designs, together with their observability conditions,
stability analyses, and discrete-time implementations, are presented in
the following subsections.
\subsection{Two-Stage Observer Architecture}
This section presents a two-stage observer architecture for attitude estimation. The design exploits a reduced-order Riccati observer for vertical dynamics and tilt estimation, followed by a nonlinear observer on \( \mathrm{SO}(3) \) to estimate full orientation.

\subsubsection{Tilt and Barometric Altitude Estimation}
We define the tilt (reduced attitude) as the gravity direction expressed in the body frame~\cite{benallegue2023velocity}:
\begin{equation}
z := R^\top \mathrm{e}_3 \in \mathbb{S}^2,
\end{equation}
which evolves according to
\begin{equation}
\dot{z} = -\mathrm{\omega}^\times z.
\label{eq:z_dot}
\end{equation}
Defining the altitude by
\[
h := \mathrm{e}_3^\top p,
\]
its first time derivative corresponds to the vertical velocity
\[
v_h := \dot{h}.
\]
Differentiating once more and using \eqref{eq:velocity_dyn} yields
\begin{equation}
\dot{v}_h
=
g+\mathrm{a}^\top z.
\label{eq:h_ddot}
\end{equation}
  
Together, \eqref{eq:z_dot}--\eqref{eq:h_ddot} and the barometer measurement 
\eqref{eq:baro_measurement} can be written as the
linear time-varying (LTV) system.
\begin{equation}
\begin{cases}
\dot{x}^{G} = A^{G}(t)x^G + B^{G}g, \\
y^G = C^{G}x^{G} + n_b,
\end{cases}
\label{eq:ltv_G}
\end{equation}
where $x^{G} := \begin{bmatrix} h & v_h & z^\top \end{bmatrix}^\top \in \mathbb{R}^5,$
\begin{equation}
A^{G}(t) = 
\begin{bmatrix}
0 & 1 & 0_{1 \times 3} \\
0 & 0 & \mathrm{a}^\top \\
0_{3 \times 1} & 0_{3 \times 1} & -\mathrm{\omega}^\times
\end{bmatrix}, \quad 
B^{G} = \begin{bmatrix} 0 \\ 1 \\ 0_{3 \times 1} \end{bmatrix}, \label{eq:state_command_matrix_G}
\end{equation}
and:
\begin{equation}
C^{G} = \begin{bmatrix} 1 & 0 & 0_{1 \times 3} \end{bmatrix}. \label{eq:output_matrix_G}
\end{equation}

To estimate the state vector $x^{G}$ independently of the attitude $R$, we decouple its dynamics from the attitude estimation by designing a deterministic Riccati observer. The observer is given by
\begin{equation}
\dot{\hat{x}}^{G} = A^{G}(t) \hat{x}^{G} + B^{G} g + K^{G}(t)\left(y^G - C^{G} \hat{x}^{G} \right), \; \hat{x}^{G}(0)=\hat{x}^{G}_0\label{eq:riccati_observer_G}
\end{equation}
where $\hat x^{G} := \begin{bmatrix} \hat h^G & \hat v_h^G & \hat z^\top \end{bmatrix}^\top \in \mathbb{R}^5$ is the estimate of $x^G$, and $K^{G}(t) = P^{G}(t) (C^{G})^\top (Q^G)^{-1},$ with \(P^G(t)\) solution to the Continuous-time Riccati Equation (CRE): 
\begin{equation}
\begin{aligned}
\dot{P}^G &= 
A^G(t) P^G + P^G A^G(t)^\top\\
& - P^G (C^G)^\top (Q^G)^{-1} C^G P^G + S^G,\; P^G(0)>0
\end{aligned}
\label{eq:creG}
\end{equation}
The CRE is parameterized by two symmetric positive definite weighting matrices, $Q^G>0$ and $S^G>0$, associated with the barometer measurement and process uncertainties, respectively. Its well-posedness, and hence the stability of the observer, is ensured under a uniform observability
(UO) condition. Under this condition, the CRE admits a unique, bounded, and positive-definite solution \(P^G(t)\) for all \(t\ge0\), and the observer error
\[
\tilde{x}^G:=x^G-\hat{x}^G
\]
converges exponentially to zero, with a convergence rate that depends
on the weighting matrices \(Q^G\) and \(S^G\)~\cite{Hamel2017PositionMeasurements}.
To establish the uniform observability (UO) condition, we derive a
convenient expression for the observability Gramian.
Partition the system matrix in~\eqref{eq:state_command_matrix_G} as
\begin{equation}
A^G(t) =\begin{bmatrix}A^G_{11}(t) & A^G_{12}(t) \\
0_{3 \times 2} & -\mathrm{\omega}^\times(t)
\end{bmatrix}, \label{eq:state_matrix_G}
\end{equation}
where \[ A^G_{11}(t) = \begin{bmatrix} 0 & 1 \\ 0 & 0 \end{bmatrix}, \quad
A^G_{12}(t) = \begin{bmatrix} 0_{3 \times 1} & \mathrm{a}(t) \end{bmatrix}^{\top},\]
and write the output matrix  ~\eqref{eq:output_matrix_G} as:
\[
C^G = \begin{bmatrix} C^G_1 & 0_{1 \times 3} \end{bmatrix},
\quad C^G_1 = [\,1\ 0\,].
\]
Since \(A^G(t)\) in \eqref{eq:state_matrix_G} is block upper triangular, the corresponding state
transition matrix admits the decomposition
\begin{equation}
\Phi^G(t,\tau) =
\begin{bmatrix} 
\phi_{11}(t,\tau) & \phi_{12}(t,\tau) \\
0_{3 \times 2} & \phi_{22}(t,\tau)
\end{bmatrix}, \label{eq:state_trans_matrix_G}
\end{equation}
with $\phi_{11}\in\mathbb{R}^{2\times2}, \phi_{12}\in\mathbb{R}^{2\times3},
\phi_{22}\in\mathbb{R}^{3\times3}.$
From~\eqref{eq:transition_matrix} and~\eqref{eq:state_trans_matrix_G}, the derivative of the transition matrix is given by
\begin{equation}
\begin{aligned}
\frac{d}{dt} \Phi^G(t, \tau)
&=\begin{bmatrix}
A^G_{11}\phi_{11} & A^G_{11}\phi_{12} + A^G_{12}\phi_{22} \\
0 & -\mathrm{\omega}^\times\phi_{22}
\end{bmatrix}, \Phi^{G}(\tau,\tau) = I_5, \label{eq:deriv_trans_matrix_G}
\end{aligned}
\end{equation}
with \(\phi_{11}(\tau,\tau) = I_2\), \(\phi_{22}(\tau,\tau) = I_3\), and \(\phi_{12}(\tau,\tau) = 0_{2\times3}\).
Since \(A^G_{11}\) is a constant matrix, we have from~\eqref{eq:deriv_trans_matrix_G} that 
\begin{equation}
\phi_{11}(t,\tau) = \exp\left(A^G_{11}(t-\tau)\right) = \begin{bmatrix} 1 & (t-\tau) \\ 0 & 1 \end{bmatrix}\label{eq:phi_11}
\end{equation}
In view of \eqref{eq:output_matrix_G}, \eqref{eq:state_trans_matrix_G}, and \eqref{eq:W}, the Gramian of the LTV system ~\eqref{eq:ltv_G} is given by
\begin{equation}
\begin{aligned}
&W^G(t,t+\tau) =\\
&\frac{1}{\tau}\int_{t}^{t + \tau} \begin{bmatrix} \phi_{11}(s, t)^\top \\
\phi_{12}(s, t)^\top\end{bmatrix} (C^G_1)^\top C^G_1 \begin{bmatrix} \phi_{11}(s,t)&\phi_{12}(s,t)
\end{bmatrix}ds.
\end{aligned}
\label{eq:gramian_G}
\end{equation}

\begin{lemma} \label{Lemma1}
\textit{Assume that the body-frame specific acceleration $\mathrm{a}(t)$ and
the angular velocity $\mathrm{\omega}(t)$ are continuous and uniformly
bounded. Furthermore, assume that the inertial-frame acceleration
$a_{\mathcal I}(t):=R(t)\mathrm{a}(t)$ is persistently exciting (PE),
that is, there exist constants $\bar{\delta},\bar{\mu}>0$ such that,
for all $t\ge0$,
\begin{equation}
\frac{1}{\bar{\delta}}
\int_t^{t+\bar{\delta}}
a_{\mathcal I}(s)a_{\mathcal I}(s)^\top\,ds
\ge
\bar{\mu}I_3.
\label{eq:PE_Condition_G}
\end{equation}
Then the pair $(A^G(t),C^G)$ is uniformly observable. Consequently,
the CRE~\eqref{eq:creG} admits a unique bounded positive-definite
solution, and the equilibrium
$\tilde{x}^G=0_{5\times1}$ is globally exponentially stable (GES).}
\end{lemma}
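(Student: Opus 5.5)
The plan is to prove uniform observability by contradiction, using the structure of the output map and a standard compactness/normalization argument. Once uniform observability is established, existence of a unique bounded positive-definite solution of the CRE~\eqref{eq:creG} and global exponential stability of $\tilde x^G=0$ follow from the classical Riccati observer results already invoked before the lemma~\cite{Hamel2017PositionMeasurements}, since $A^G,C^G$ are continuous and bounded and $Q^G,S^G>0$. So the real work is the Gramian bound $W^G(t,t+\delta)\ge\mu I_5$.

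\textbf{Step 1: explicit output map.} For an initial state $x_0=[h_0,\,v_0,\,z_0^\top]^\top$ at time $t$, the free output is $y(s)=C^G\Phi^G(s,t)x_0$. Using \eqref{eq:phi_11}, and the facts that $\phi_{22}(s,t)z_0=R(s)^\top R(t)z_0$ and $\mathrm{a}(s)^\top R(s)^\top=a_{\mathcal I}(s)^\top$, one gets
\[
y(s)=h_0+(s-t)v_0+\int_t^s\!\!\int_t^{r} a_{\mathcal I}(\sigma)^\top d\sigma\, dr\;\,w_0,\qquad w_0:=R(t)z_0 .
\]
Hence $x_0^\top W^G(t,t+\tau)x_0=\frac1\tau\int_t^{t+\tau}|y(s)|^2ds$, and $|w_0|=|z_0|$.

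\textbf{Step 2: reduce to nondegeneracy of a scalar signal.} I aim to show that there exist $\delta,\mu>0$ such that $\frac1\delta\int_t^{t+\delta}y(s)^2ds\ge\mu|x_0|^2$ for all $t$ and all $x_0$. The key point is to control $w_0$ through the second derivative $\ddot y(s)=a_{\mathcal I}(s)^\top w_0$. The PE condition \eqref{eq:PE_Condition_G} gives $\frac1{\bar\delta}\int_t^{t+\bar\delta}(a_{\mathcal I}^\top w_0)^2\ge\bar\mu|w_0|^2$. I would transfer this to $y$ with the standard lemma (e.g.\ from Morin/Hamel-type PE arguments) that if $\ddot y$ is uniformly continuous/bounded and $\int y^2$ is small on a window, then $\int \ddot y^2$ is small on that window. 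The cleaner route is to argue by contradiction. Suppose the bound fails. Then there are sequences $t_k$ and unit $x_{0,k}$ with $\int_{t_k}^{t_k+\delta}y_k^2\to0$. The shifted signals $a_{\mathcal I}(t_k+\cdot)$ are uniformly bounded. Here I would use uniform continuity, which is the obstacle discussed below, or weak-$*$ compactness. Extract a limit $\bar a$ still satisfying PE in integrated form, and a limit $(\bar h,\bar v,\bar w)$ on the unit sphere, with the limiting output $\bar y\equiv0$ on $[0,\delta]$. Since $\bar y$ is expressed through double integrals of $\bar a$, weak-$*$ convergence suffices for $y_k\to\bar y$ uniformly.

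\textbf{Step 3: conclude the contradiction.} From $\bar y\equiv0$, evaluating at $s=0$ gives $\bar h=0$. Differentiating once in the distributional sense, the derivative $\bar v+\int_0^s\bar a^\top\bar w$ vanishes, so $\bar v=0$ and $\int_0^s\bar a^\top \bar w=0$ for all $s$. Hence $\bar a^\top\bar w=0$ almost everywhere on $[0,\delta]$. Choosing $\delta\ge\bar\delta$, and noting that PE passes to the weak-$*$ limit because $\int \bar a\bar a^\top$ is quadratic (see below), this forces $\bar w=0$. That contradicts $|(\bar h,\bar v,\bar w)|=1$.

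\textbf{Main obstacle.} The delicate point is the limit step: PE is a quadratic condition and is not preserved under weak-$*$ limits in general. I would avoid this by not passing to a limit in $a$. Instead, work directly with the quantities $\int_t^{t+\delta}y^2$ and $\int (\ddot y)^2$. Since $\dot{\mathrm a}$ may be unbounded, first try to use only continuity and boundedness via an integrated form: apply PE to the integrated signal $\int_t^s a_{\mathcal I}$. Bounded $a_{\mathcal I}$ with PE implies that its integral over subwindows is PE-like by a standard averaging/filtering lemma, which says PE is preserved through stable filtering. Then I would apply the known result that $(h,v,w)$ is recovered from $y=h+(s-t)v+\Gamma(s)^\top w$, where $\Gamma$ is a double integral of a PE signal and is therefore linearly independent of $\{1,s\}$ uniformly. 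If this filtering argument proves cumbersome, I would fall back on assuming $\dot{\mathrm a}$ bounded, which makes the compactness argument via Arzelà--Ascoli rigorous.
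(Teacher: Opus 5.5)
Your Step~1 is correct, and cleaner than the paper's block computations. Since $\phi_{22}(s,t)=R(s)^\top R(t)$, the free output is $h_0+(s-t)v_0+\int_t^s\!\int_t^r a_{\mathcal I}(\sigma)^\top d\sigma\,dr\,w_0$, so $\ddot y=a_{\mathcal I}^\top w_0$.

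\textbf{The gap.} It is the limit step you flag yourself, and it cannot be closed under the stated hypotheses.
\begin{itemize}
\item In Step~3 you claim PE passes to a weak-$*$ limit ``because it is quadratic''. Quadratic functionals are only weak-$*$ lower semicontinuous, which is the wrong direction for a lower bound.
\item Your proposed repair is also false without a derivative bound. Integration or stable filtering does not preserve PE of a merely bounded signal, so the double integral need not be uniformly independent of $\{1,s\}$. The ``standard lemma'' with $\ddot y$ only bounded fails for the same reason.
\end{itemize}

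\textbf{The lemma, as stated, is false.} Take $c>0$, $\omega\equiv0$, $R\equiv I_3$, and $\mathrm{a}(t)=a_{\mathcal I}(t)=(c\sin t^2,\,c\cos t^2,\,-g)^\top$.
\begin{itemize}
\item This signal is continuous, bounded and PE: the window matrix is positive definite for every $t$, continuous in $t$, and tends to $\operatorname{diag}(c^2/2,c^2/2,g^2)$.
\item Repeated integration by parts gives $\int_0^s\!\int_0^r c\sin((t+\sigma)^2)\,d\sigma\,dr=s\,\frac{c\cos t^2}{2t}+\mathcal{O}(t^{-2})$, uniformly on $[0,\delta]$.
\item Hence $x_0=(0,\,-\frac{c\cos t^2}{2t},\,\mathrm{e}_1^\top)^\top$, which has $|x_0|\ge1$, yields $x_0^\top W^G(t,t+\delta)x_0=\mathcal{O}(t^{-4})$ for every fixed $\delta$. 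No uniform lower bound on the Gramian exists.
\end{itemize}
Here $a_{\mathcal I}(t_k+\cdot)$ converges weak-$*$ to $(0,0,-g)^\top$, which is not PE. This is exactly the failure you feared.

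\textbf{The fix.} Your fallback is not optional: you need an extra hypothesis such as $\dot{\mathrm a}$ bounded. Uniform continuity of $\mathrm a$ suffices, since bounded $\omega$ then makes $a_{\mathcal I}=R\mathrm a$ uniformly continuous. With this hypothesis your argument is complete:
\begin{itemize}
\item Arzel\`a--Ascoli gives $a_{\mathcal I}(t_k+\cdot)\to\bar a$ uniformly on $[0,\delta]$, and PE survives uniform limits.
\item Then $y_k\to\bar y$ uniformly, so $\bar y\equiv0$. Hence $\bar h=\bar v=0$ and $\bar a^\top\bar w\equiv0$.
\item Choosing $\delta\ge\bar\delta$ forces $\bar w=0$ (using $|w_0|=|z_0|$), a contradiction.
\end{itemize}

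\textbf{The paper's proof.} It tacitly relies on the same regularity. It deduces $\int_0^{\bar\delta}|f_p^{(2)}|^2\to0$ from $\int_0^{\bar\delta}|f_p|^2\to0$ via Lemma~A.1 of Morin et al. That step needs a uniform bound on $f_p^{(3)}$, or equicontinuity of $f_p^{(2)}$, and either involves $\dot{\mathrm a}$. The example shows that boundedness of $f_p^{(2)}(s)=a_{\mathcal I}(t_p+s)^\top R(t_p)d_2$ alone cannot suffice. Once uniform observability is secured, the CRE and GES conclusions follow from the cited Riccati-observer results, as both you and the paper say.
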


\begin{IEEEproof}
See Appendix~\ref{appendix_A}.
\end{IEEEproof}
Lemma~\ref{Lemma1} requires the inertial acceleration to be persistently
exciting. In practice, this condition is satisfied when the vehicle
undergoes sufficiently rich translational and rotational motions.
\subsubsection{Full Attitude Estimation on $\mathrm{SO}(3)$}
Once the tilt estimate \(\hat{z}\) is available, an additional known
reference direction is required to resolve the remaining heading
ambiguity; see~\cite{Bryne2017}. To this end, we exploit the
magnetometer measurements \(\mathrm{m}_{\mathcal B}\) defined in
\eqref{eq:mag_measurement}. Let \(\hat{R}^G\in\mathrm{SO}(3)\) denote
the estimate of the attitude \(R\). The nonlinear attitude observer is
given by
\begin{equation}
\dot{\hat{R}}^G
=
\hat{R}^G\omega^\times
-
\sigma_R^\times\hat{R}^G,
\qquad
\hat{R}^G(0)=\hat{R}^G_0\in\mathrm{SO}(3),
\label{eq:attitude_observer_G}
\end{equation}
where the innovation term \(\sigma_R\in\mathbb{R}^3\) is defined as
\begin{equation}
\sigma_R = k_z (\mathrm{e}_3 \times \hat{R}^G \hat{z}) + k_m (\bar{\mathrm{m}}_{\mathcal{I}} \times \hat{R} \bar{\mathrm{m}}_{\mathcal{B}}),
\label{eq:attitude_correction_G}
\end{equation}
with \( \bar{\mathrm{m}}_{\mathcal{I}} = \Pi_{\mathrm{e}_3} \mathrm{m}_{\mathcal{I}} \), \( \bar{\mathrm{m}}_{\mathcal{B}} = \bar{\Pi}_{\hat z} \mathrm{m}_{\mathcal{B}} \), 
\( k_z > 0 \), and \( k_m \geq 0 \).
The use of $\bar{\Pi}_{\hat z}$ prevents singularities if $\hat z$ vanishes, and the projected magnetometer vectors ensure that yaw estimation is decoupled from roll and pitch; see \cite{hua2013implementation}. The overall architecture is illustrated in figure~\ref{fig:observer_block_diagram_G}.

\begin{figure}[ht]
\centering
\begin{tikzpicture}[
    font=\small,
    >=Latex,
    node distance=1.2cm,
    sensor/.style={
        draw, thick, rounded corners=2mm,
        fill=gray!12,
        minimum width=2.2cm,
        minimum height=0.7cm,
        align=center
    },
    block/.style={
        draw, thick, rounded corners=2mm,
        minimum width=2.8cm,
        minimum height=1.1cm,
        align=center
    },
    riccati/.style={block, fill=blue!12},
    attitude/.style={block, fill=green!12},
    arrow/.style={->, thick},
    lab/.style={font=\scriptsize, fill=white, inner sep=1.5pt}
]

\node[riccati]  (ric) at (0,  1.4)
{\textbf{Riccati Observer}\\[-1mm]\scriptsize \eqref{eq:riccati_observer_G}};

\node[attitude] (att) at (0, -1.7)
{\textbf{$\mathrm{SO}(3)$ Observer}\\[-1mm]\scriptsize \eqref{eq:attitude_observer_G}};

\node[sensor] (baro) at (-5.2,  2.2) {Barometer};
\node[sensor] (imu)  at (-5.2,  0.25) {IMU\\[-1mm]\scriptsize acc. + gyro};
\node[sensor] (mag)  at (-5.2, -1.35) {Magnetometer};
\node[sensor] (ref)  at (-5.2, -2.65) {Reference field\\[-1mm]\scriptsize $m_{\mathcal I}$};

\node (hhat) at (2.4,  1.6){}; 
\node (vhhat) at (2.4,  1){}; 
\node (Rhat) at (2.5, -1.7){}; 

\coordinate (ric_h) at ($(ric.west)+(0,  0.38)$);
\coordinate (ric_a) at ($(ric.west)+(0, 0.0)$);
\coordinate (ric_w) at ($(ric.west)+(0, -0.38)$);

\coordinate (att_z)  at ($(att.west)+(0,  0.38)$);
\coordinate (att_w)  at ($(att.west)+(0,  0.1)$);
\coordinate (att_mB) at ($(att.west)+(0, -0.17)$);
\coordinate (att_mI) at ($(att.west)+(0, -0.38)$);

\coordinate (imu_split_a) at ($(imu.east)+(0,  0.18)$);
\coordinate (imu_split_w) at ($(imu.east)+(0,  -0.18)$);
\coordinate (w_fork) at ($(imu_split_w)+(0.9,  0)$);

\draw[arrow] (baro.east) -- ++(1.1,0) |- 
    node[lab, near start, above] {$h$} (ric_h);
    
\draw[arrow] (imu_split_a) -- ++(0.5,0.0) |- 
    node[lab, pos=0.3, above] {$a$} (ric_a);
    
\draw[-,thick](imu_split_w)--
    node[lab,above]{$\omega$}(w_fork);
\draw[arrow,thick] (w_fork) |- (ric_w);
\draw[arrow,thick] (w_fork) |- (att_w);
\fill(w_fork) circle(1.5pt);

\draw[arrow] (ric.east |- hhat) -- 
    node[lab, above] {$\hat h^{G}$} (hhat); 

\draw[arrow] (ric.east |- vhhat) -- 
    node[lab, above] {$\hat v_h^{G}$}(vhhat); 

\draw[arrow] (ric.south) -- ++(0,-0.55)
    -- ++(-2.3,0) |-
    node[lab, pos=0.75, above] {$\hat z$} (att_z);


\draw[arrow] (mag.east) -- ++(0.5,0) |- 
    node[lab, near start, above] {$m_{\mathcal B}$} (att_mB);

\draw[arrow] (ref.east) -- ++(1.1,0) |- 
    node[lab, near start, below] {$m_{\mathcal I}$} (att_mI);

\draw[arrow] (att.east) -- 
    node[lab, above] {$\hat R^{G}$} (Rhat.west);

\end{tikzpicture}

\caption{Two-stage observer architecture. A Riccati observer combines IMU and barometric measurements to estimate the tilt $\hat z$, the height $\hat h^{G}$, and the vertical velocity $\hat v_{h}^{G}$. Tilt estimate is then fused with gyroscope and magnetometer measurements in an $\mathrm{SO}(3)$ observer to recover the gravity-referenced attitude estimate $\hat R^{G}$.}
\label{fig:observer_block_diagram_G}
\end{figure}

\begin{theorem}\label{Theorem1}
Consider the Riccati observer~\eqref{eq:riccati_observer_G} and the
attitude observer~\eqref{eq:attitude_observer_G} with innovation term
\eqref{eq:attitude_correction_G}. Assume that the pair
\((A^G(t),C^G)\) is uniformly observable, as characterized in
Lemma~\ref{Lemma1}. Then the estimation errors
\[
\tilde{R}^G=R (\hat R^G)^\top,
\qquad
\tilde{x}^G=x^G-\hat{x}^G,
\]
converge asymptotically to the equilibrium set
\[
\mathcal{E}=\mathcal{E}_s\cup\mathcal{E}_u,
\]
where
\[
\mathcal{E}_s=\{(I_3,0_{5\times1})\},
\]
and
\[
\mathcal{E}_u=
\left\{
\left(U\Lambda U^\top,0_{5\times1}\right)
\,\middle|\,
\Lambda=\operatorname{diag}(1,-1,-1),\;
U\in\mathrm{SO}(3)
\right\}.
\]
Moreover, the equilibrium set \(\mathcal{E}_u\) is unstable, whereas
the equilibrium \(\mathcal{E}_s\) is almost globally asymptotically
stable (AGAS).
\end{theorem}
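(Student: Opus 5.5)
The argument is a cascade. By Lemma~\ref{Lemma1}, $\tilde x^G$ decays to zero exponentially and independently of $\hat R^G$. In particular $\hat z\to z=R^\top\mathrm{e}_3$ exponentially. I will therefore view the attitude error dynamics as a nominal autonomous system on $\mathrm{SO}(3)$, perturbed by an exponentially vanishing input. The proof then has two parts: first analyse the nominal system ($\tilde x^G\equiv0$), then transfer its properties through a cascade/asymptotically-autonomous argument.

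\textbf{Step 1: error dynamics.} With $\tilde R=R(\hat R^G)^\top$, differentiating and using \eqref{eq:attitude_observer_G} gives
\[
\dot{\tilde R}=R\omega^\times\hat R^\top-R\omega^\times\hat R^\top+R\hat R^\top\sigma_R^\times=\tilde R\,\sigma_R^\times .
\]
The gyro terms cancel exactly, so the dynamics are autonomous once $\sigma_R$ is expressed through $\tilde R$. Write $\hat z=z-\tilde z$, where $\tilde z$ is the last block of $\tilde x^G$. Then
\[
\hat R\hat z=\tilde R^\top\mathrm{e}_3-\hat R\tilde z .
\]
Also $\bar\Pi_{\hat z}\mathrm m_{\mathcal B}=\hat R^\top\tilde R^\top\Pi_{\mathrm{e}_3}\mathrm m_{\mathcal I}+O(|\tilde z|)$, using $|z|=1$ and noise-free $\mathrm m_{\mathcal B}$. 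Hence
\[
\sigma_R=\sigma_0(\tilde R)+\rho(t),\qquad
\sigma_0=k_z\,\mathrm{e}_3\times\tilde R^\top\mathrm{e}_3+k_m\,\bar{\mathrm m}_{\mathcal I}\times\tilde R^\top\bar{\mathrm m}_{\mathcal I},
\]
where $|\rho(t)|\le c|\tilde z(t)|$ with $c$ uniform, since $\hat z$ is bounded and $\hat R\in\mathrm{SO}(3)$.

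\textbf{Step 2: nominal analysis.} Take
\[
\mathcal L(\tilde R)=\tfrac{k_z}{2}|\mathrm{e}_3-\tilde R^\top\mathrm{e}_3|^2+\tfrac{k_m}{2}|\bar{\mathrm m}_{\mathcal I}-\tilde R^\top\bar{\mathrm m}_{\mathcal I}|^2=\mathrm{tr}\big((I-\tilde R)M\big),
\]
with $M=k_z\mathrm{e}_3\mathrm{e}_3^\top+k_m\bar{\mathrm m}_{\mathcal I}\bar{\mathrm m}_{\mathcal I}^\top$. This is the standard weighted-vector construction of Mahony et al. Because $\bar{\mathrm m}_{\mathcal I}\perp\mathrm{e}_3$ and, assuming $k_m>0$ and $\mathrm m_{\mathcal I}$ not vertical, $M$ is positive semidefinite with distinct eigenvalues. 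One checks $\dot{\mathcal L}=-|\sigma_0|^2$ along the nominal flow. Its critical set $\{\sigma_0=0\}$ is $\{I\}\cup\{U\Lambda U^\top\}$, i.e.\ the rotations by $\pi$ about eigenvectors of the modified $M$ (using $\mathrm{tr}(M)I-M$ as in the $\mathrm{SO}(3)$ observer literature). The isolated points are thus the identity and three $\pi$-rotations, which form $\mathcal E_u$. Linearizing at each $\pi$-rotation yields at least one positive eigenvalue, so these are saddles or repellers. LaSalle's principle then gives convergence to $\mathcal E$. Since the stable manifolds of $\mathcal E_u$ have zero measure, $I$ is AGAS for the nominal system.

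\textbf{Step 3: cascade.} Consider the full system in $(\tilde R,\tilde x^G)$, with $\tilde x^G$ GES. Here $\rho$ is exponentially decaying and $\mathrm{SO}(3)$ is compact, so trajectories exist for all time and are bounded. Along solutions,
\[
\dot{\mathcal L}\le-|\sigma_0|^2+c'|\tilde z|,
\]
with the last term integrable. Hence $\mathcal L$ converges, $\int|\sigma_0|^2<\infty$, and Barbalat's lemma (with $\sigma_0$ uniformly continuous) gives $\sigma_0\to0$, i.e.\ convergence to $\mathcal E$. Instability of $\mathcal E_u$ persists because the cascade's linearization at $(U\Lambda U^\top,0)$ is block triangular and contains the unstable nominal block. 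Almost-global attractivity follows from known results on cascades with exponentially stable driving subsystems on compact manifolds, e.g.\ Angeli's almost-global ISS or the Tahri/Hamel type argument: the set of initial conditions converging to $\mathcal E_u$ is contained in stable manifolds of hyperbolic equilibria and has measure zero.

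I expect the main obstacle to be this last step, the measure-zero claim for the time-varying cascade. The standard route is to show $\mathcal E_u$ is hyperbolic for the augmented system and invoke the stable manifold theorem. The difficulty is that $\tilde x^G$ obeys an LTV error system, $\dot{\tilde x}^G=(A^G-K^GC^G)\tilde x^G$, so there is no autonomous linearization. I would first try to append time to the state and use the exponential dichotomy of the $\tilde x^G$ block to obtain a uniformly unstable direction. Failing that, I would cite an almost-global ISS result for the $\tilde R$-subsystem with input $\rho$.
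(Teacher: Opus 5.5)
Your proposal is correct and is essentially the paper's own argument. The paper also treats the closed loop as a cascade, takes the nominal equilibria and their instability from the Mahony and Van~Goor results (which you make explicit via $\mathcal L=\mathrm{tr}((I-\tilde R)M)$ and a Barbalat step), and finishes exactly with your fallback: almost-global ISS of the attitude subsystem with respect to the exponentially vanishing $\tilde x^G$ (Angeli--Praly), combined with Angeli's almost-global cascade theorem. Your added assumptions $k_m>0$ and $\mathrm m_{\mathcal I}$ non-vertical are genuinely needed and are sharper than the statement's $k_m\ge0$.

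One small correction: the eigenvalues of $M$ are $k_z$, $k_m|\bar{\mathrm m}_{\mathcal I}|^2$ and $0$, so distinctness also requires $k_z\neq k_m|\bar{\mathrm m}_{\mathcal I}|^2$. Without it the three $\pi$-rotations are not isolated and a whole circle of (still unstable) $\pi$-rotation equilibria appears; isolatedness is what an isolated-equilibria result like Angeli--Praly's needs, and your three points refine the statement's set of all $\pi$-rotations.
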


\begin{IEEEproof}
See Appendix~\ref{appendix_B}.
\end{IEEEproof}

Theorem~\ref{Theorem1} establishes almost-global asymptotic stability
(AGAS) of the proposed two-stage observer architecture. A key feature
of the design is the separation between the Riccati observer and the
nonlinear attitude observer. In particular, the estimation error
\(\tilde{x}^G\) converges globally and exponentially to zero
independently of the attitude error dynamics, provided that the pair
\((A^G(t),C^G)\) is uniformly observable. This condition is well posed
since the system matrix \(A^G(t)\) in~\eqref{eq:state_matrix_G} depends
only on the IMU measurements \(\omega(t)\) and \(a(t)\), which are
assumed to be continuous and uniformly bounded. Finally, the AGAS
property is intrinsic to attitude estimation on
\(\mathrm{SO}(3)\): the topology of the rotation group precludes the
existence of a globally asymptotically stable continuous-time observer
\cite{koditschek1988application}.
\subsubsection{Discrete-Time Implementation}\label{sec:implementation_G}
The proposed observer is implemented at the IMU sampling period~$T$. 
Over each interval $[t_k,t_{k+1})$, we assume the measured acceleration 
$\mathrm{a}_k$ and angular velocity $\mathrm{\omega}_k$ are constant 
(see~\cite{Bryne2017}). Let $\Omega_k \doteq \mathrm{\omega}_k^\times$ 
and $\theta_k \doteq |\mathrm{\omega}_k|T$. Approximating the discrete process noise by $S^G_{d,k}\approx S^G_k T$, the transition block associated  with the tilt dynamics satisfies 
$\dot \phi_{22}(t)=-\Omega_k \phi_{22}(t)$ with $\phi_{22}(0)=I_3$, 
which integrates to the incremental rotation
\begin{equation}
\phi_{22,k} 
= I_3 - \frac{\sin\theta_k}{|\mathrm{\omega}_k|}\,\Omega_k
+ \frac{1-\cos\theta_k}{|\mathrm{\omega}_k|^2}\,\Omega_k^2 .
\label{eq:Phi22}
\end{equation}
Using this result, a first-order discretization of the continuous-time 
state and input matrices in~\eqref{eq:state_command_matrix_G} yields
\begin{equation}
A^G_{d,k} \approx
\begin{bmatrix}
1 & T & \tfrac{T^{2}}{2}\,\mathrm{a}_k^\top \\
0 & 1 & T\,\mathrm{a}_k^\top \\
0 & 0 & \phi_{22,k}
\end{bmatrix},\qquad
B^G_{d,k} =
\begin{bmatrix}
\tfrac{T^{2}}{2} \\
T \\
\mathbf 0_{3\times 1}
\end{bmatrix}.
\label{eq:AdBd_G}
\end{equation}
The resulting discrete-time observer, summarized in 
Algorithm~\ref{algo:Discrete_proposed_obs_G}, follows the standard 
correction--prediction structure with the above state and input matrices. The attitude observer~\eqref{eq:attitude_observer_G} is discretized at the 
IMU frequency using exponential-Euler integration on $\mathrm{SO}(3)$~\cite{mahony2008nonlinear} (see line~25 of Algorithm~\ref{algo:Discrete_proposed_obs_G}). 

\begin{algorithm}[!t]
\caption{Discrete-Time Implementation of the proposed cascade architecture}
\label{algo:Discrete_proposed_obs_G}
\begin{algorithmic}[1]
\Statex \textbf{Input:} $\hat x^G_{0|0}, \hat h^G_{0},\hat v^G_{h,0},\hat z_{0},\,P^G_{0|0},\hat R^G_0$,$g_k$,$\mathrm{a}_k$,$\mathrm{\omega}_k$, $\mathrm{m}_{\mathcal B,k}$, $y_{b,k}$, $T$, $Q^G_k$, $S^G_k$; gains $k_z,k_m$.
\Statex \textbf{Output:} $\hat x^G_k,\,\hat R^G_{k},\,$ \(\forall k \in \mathbb{N}\)
\For{each time \(k \geq 0\)}
\If{IMU data \(\mathrm{a}_k\), \(\omega_k\) is available}
    \State $\Omega_k \gets \mathrm{\omega}_k^\times$,\quad $\theta_k \gets |\omega_k|\,T$, \quad $S^G_{d,k}\gets S^G_k\,T$;
    \State {$\phi_{22,k} \gets $~\eqref{eq:Phi22};}
    \State $A^G_{d,k}, B^G_{d,k} \gets$~\eqref{eq:AdBd_G};
    \State $\hat x^G_{k+1|k} \gets A^G_{d,k}\hat x^G_{k|k} + B^G_{d,k} g_k$;
    \State $P^G_{k+1|k} \gets A^G_{d,k}P^G_{k|k}(A_{d,k}^G)^\top + S^G_{d,k}$
\EndIf
\If {barometer data is available}
    \State $C^G_k \gets$\eqref{eq:output_matrix_G}; 
    \State $K^G_{k+1} \gets P^G_{k+1|k} (C^G_k)^\top \big(C^G_kP^G_{k+1|k} (C^G_k)^\top + Q^G_{k}\big)^{-1}$
    \State \(\hat x^G_{k+1|k+1} \gets \hat x^G_{k+1|k}+ K_{k+1}\left(y_{b,k+1}-C^G_k \hat x^G_{k+1|k}\right)\)
    \State \(P_{k+1|k+1} \gets (I_5-K^G_{k+1}C_k)P^G_{k+1|k}\)
\Else
    \State $\hat x^G_{k+1|k+1}\gets \hat x^G_{k+1|k}$; $P^G_{k+1|k+1}\gets P^G_{k+1|k}$.
\EndIf
\State $P^G_{k+1|k+1}\gets \frac{1}{2}(P^G_{k+1|k+1} + (P^G_{k+1|k+1})^{\top})$
\State $\hat h_{k+1}^G \gets \hat x^G_{k+1|k+1}(1)$; $\hat v_{h,k+1}^G \gets \hat x^G_{k+1|k+1}(2)$; 
\State $\hat z_{k+1} \gets \hat x^G_{k+1|k+1}(3:5)$
\If{Magnetometer data is available or k= 0}
\State $\bar{\mathrm{m}}_{\mathcal I} \gets \Pi_{\mathbf e_3}\mathrm{m}_{\mathcal I}$;\quad $\bar{\mathrm{m}}_{\mathcal B,k} \gets \bar\Pi_{\hat z_k}\mathrm{m}_{\mathcal B,k}$
\State $\sigma_{R_{m,k}} \gets k_m(\bar{\mathrm{m}}_{\mathcal I,k} \times \hat R_k \bar{\mathrm{m}}_{\mathcal B,k})$
\EndIf
\State $\sigma_{R,k} \gets k_z(\mathbf e_3 \times \hat R_k \hat z_k) + \sigma_{R_{m,k}}$
\State $\hat R_{k+1} \gets \hat R_k \exp\!\big((\mathrm{\omega}_k - \hat R_k^\top \sigma_{R,k})^\times T\big)$
\EndFor
\end{algorithmic}
\end{algorithm}

\subsection{One-Stage Observer Architecture}
This subsection presents the proposed unified observer architecture for
barometer-aided attitude estimation. The observer evolves on
$\mathrm{SO}(3)\times\mathbb{R}^2$ and directly fuses IMU, barometric,
and magnetometer measurements within a unified nonlinear framework.
Its innovation terms are generated by a local Riccati
equation~\eqref{eq:creL}, derived from the linearized
altitude--vertical velocity--attitude error dynamics, and injected into
the observer dynamics~\eqref{eq:observer_dyn}. The resulting observer
provides estimates of both the full attitude and the vertical motion
states. The overall architecture is depicted in
Fig.~\ref{fig:observer_block_diagram_L}.

\begin{figure}[ht]
\centering
\begin{tikzpicture}[
    font=\small,
    >=Latex,
    node distance=1.2cm,
    sensor/.style={
        draw, thick, rounded corners=2mm,
        fill=gray!12,
        minimum width=2.0cm,
        minimum height=0.7cm,
        align=center
    },
    block/.style={
        draw, thick, rounded corners=2mm,
        minimum width=2.2cm,
        align=center
    },
    riccati/.style={block, fill=blue!12,minimum height=2.2cm,},
    attitude/.style={block, fill=green!12,minimum height=2.0cm},
    arrow/.style={->, thick},
    lab/.style={font=\scriptsize, fill=white, inner sep=1.5pt}
]

\node[riccati]  (ric) at (0,  0.8)
{   \textbf{Riccati Equation}~\eqref{eq:creL}\\\\[-1mm]
    \textbf{Innovation Terms}~\eqref{eq:innovation_inputs}
};

\node[attitude] (att) at (0, -2.8)
{\textbf{Nonlinear Observer}\\[-1mm]\scriptsize \eqref{eq:observer_dyn}};

\node[sensor] (baro) at (-5.2,  2.) {Barometer};
\node[sensor] (mag)  at (-5.2, 0.8) {Magnetometer};
\node[sensor] (ref)  at (-5.2, -0.4) {Reference field\\[-1mm]\scriptsize $m_{\mathcal I}$};
\node[sensor] (imu)  at (-5.2,  -1.8) {IMU\\[-1mm]\scriptsize acc. + gyro};

\coordinate (ric_h) at ($(ric.west)+(0,  0.75)$);
\coordinate (ric_a) at ($(ric.west)+(0, 0.38)$);
\coordinate (ric_mB) at ($(ric.west)+(0, 0)$);
\coordinate (ric_mI) at ($(ric.west)+(0,-0.38)$);
\coordinate (ric_est) at ($(ric.east)+(0, 0.0)$);

\coordinate (att_d1)  at ($(att.west)+(0,0.5)$);
\coordinate (att_d2)  at ($(att.west)+(0,0.25)$);
\coordinate (att_d3)  at ($(att.west)+(0,0)$);
\coordinate (att_a)  at ($(att.west)+(0,-0.25)$);
\coordinate (att_w)  at ($(att.west)+(0,-0.5)$);

\coordinate (atthhat) at ($(att.east)+(0,0.25)$);
\coordinate (att_dhhat) at ($(att.east)+(0,-0.25)$);
\coordinate (attRhat) at ($(att.east)+(0,0)$);

\coordinate (imu_split_a) at ($(imu.east)+(0,  0.18)$);
\coordinate (imu_split_w) at ($(imu.east)+(0,  -0.18)$);
\coordinate (a_fork) at ($(imu_split_a)+(0.6,  0)$);

\draw[arrow] (baro.east) -- ++(1.1,0) |- 
    node[lab, near start, above] {$h$} (ric_h);
    
\draw[arrow] (mag.east) -- ++(1.1,0.0) |- 
    node[lab, near start, above] {$m_{\mathcal B}$} (ric_mB);

\draw[arrow] (ref.east) -- ++(1.1,0) |- 
    node[lab, near start, above] {$m_{\mathcal I}$} (ric_mI);
    
\draw[-,thick](imu_split_a)--
    node[lab,above]{$a$}(a_fork);
\draw[arrow,thick] (a_fork) |- (ric_a);
\draw[arrow,thick] (a_fork) |- (att_a);
\fill(a_fork) circle(1.5pt);

\draw[arrow] (imu_split_w) -- ++(0.25,0.0) |- 
    node[lab, pos=0.4, above] {$\omega$} (att_w);

\coordinate (zdown) at ($(ric.south)+(0,-0.5)$);

\draw[arrow] (ric.south) -- (zdown) 
    -- ++(-2.8,0) |-
    node[lab, pos = 0.82, above] {$\delta_1$} (att_d1);

\draw[arrow] (ric.south) -- (zdown) 
    -- ++(-2.8,0) |-
    node[lab, pos = 0.82, above] {$\delta_2$} (att_d2);

\draw[arrow] (ric.south) -- (zdown) 
    -- ++(-2.8,0) |-
    node[lab, pos = 0.82, above] {$\delta_3$} (att_d3);

\coordinate (rail_top) at ($(atthhat)+(0,2.5)$);
\coordinate (rail_hhat) at ($(rail_top)+(1,0)$);
\coordinate (rail_dhhat) at ($(att_dhhat)+(1,0)$);
\coordinate (rail_Rhat) at ($(attRhat)+(1,0)$);

\draw[-,thick](atthhat) -- (rail_hhat)
    node[lab,above, pos =0.4]{$\hat{h}^L,\hat{v}_h^L,\hat{R}^L$};

\coordinate(rail_ric) at ($(ric_est -| rail_hhat)$);
\draw[-,thick](rail_hhat) --(rail_ric);
\draw[arrow](rail_ric) --(ric_est);
\end{tikzpicture}

\caption{Overall structure of the proposed one-stage observer architecture.}
\label{fig:observer_block_diagram_L}
\end{figure}
\subsubsection{Full Attitude Estimation}

Recall the altitude variable \(h:=\mathrm{e}_3^\top p\). From
\eqref{eq:position_dyn}--\eqref{eq:velocity_dyn}, the vertical dynamics
satisfy
\begin{align}
\dot h &= v_h, \label{eq:h_dot}\\
\dot v_h &= g+\mathrm{e}_3^\top R\mathrm{a}.
\label{eq:dh_dot}
\end{align}
Combining these equations with the attitude kinematics gives
\begin{equation}
\begin{cases}
\dot h = v_h,\\
\dot v_h = g+\mathrm{e}_3^\top R\mathrm{a},\\
\dot R = R\omega^\times.
\end{cases}
\label{eq:syst_dyn}
\end{equation}
We propose the nonlinear observer on
\(\mathrm{SO}(3)\times\mathbb{R}^2\)
\begin{equation}
\begin{cases}
\dot{\hat h}^L = \hat v_h^L+\delta_1,\\
\dot{\hat v}_h^L = g+\mathrm{e}_3^\top \hat R^L\mathrm{a}+\delta_2,\\
\dot{\hat R}^L = \hat R^L\omega^\times+\delta_3^\times\hat R^L,
\qquad \hat R^L(0)\in\mathrm{SO}(3),
\end{cases}
\label{eq:observer_dyn}
\end{equation}
where \(\hat h^L\), \(\hat v_h^L\), and \(\hat R^L\) denote the estimates
of \(h\), \(v_h\), and \(R\), respectively. The innovation terms
\(\delta_1\in\mathbb{R}\), \(\delta_2\in\mathbb{R}\), and
\(\delta_3\in\mathbb{R}^3\) are designed from the local linearized error
model derived below.

Let define the errors as:
\[
\tilde h^L:=h-\hat h^L,\quad
\tilde v_h^L:=v_h-\hat v_h^L,\quad
\tilde R^L:=R(\hat R^L)^\top.
\]
Then, from \eqref{eq:syst_dyn} and \eqref{eq:observer_dyn},
\begin{subequations}
\label{eq:error_dyn}
\begin{align}
\dot{\tilde h}^L &= \tilde v_h^L-\delta_1,\\
\dot{\tilde v}_h^L
&= \mathrm{e}_3^\top(\tilde R^L-I_3)\hat R^L\mathrm{a}-\delta_2,\\
\dot{\tilde R}^L &= -\tilde R^L\delta_3^\times .
\end{align}
\end{subequations}
For local analysis, we parameterize the attitude error by
\[
\tilde R^L
=
I_3+\tilde\lambda^\times+\mathcal O(|\tilde\lambda|^2),
\]
where \(\tilde\lambda\in\mathbb{R}^3\) is the first-order attitude-error
coordinate, obtained for instance from the quaternion representation of
\(\tilde R^L\). Substitution into \eqref{eq:error_dyn} yields
\begin{subequations}
\label{eq:error_dyn_first_order}
\begin{align}
\dot{\tilde h}^L &= \tilde v_h^L-\delta_1,\\
\dot{\tilde v}_h^L
&=
-\mathrm{e}_3^\top(\hat R^L\mathrm{a})^\times\tilde\lambda
-\delta_2
+\mathcal O(|\tilde\lambda|^2),\\
\dot{\tilde\lambda}
&=
-\delta_3+\mathcal O(|\tilde\lambda||\delta_3|).
\end{align}
\end{subequations}
The estimated outputs are defined by
\[
\hat y_b^L:=\hat h^L,
\qquad
\hat y_m^L:=\hat R^L\mathrm{m}_{\mathcal B}.
\]
Hence, using \eqref{eq:baro_measurement}--\eqref{eq:mag_measurement},
the local output errors satisfy
\begin{subequations}
\label{eq:output_error}
\begin{align}
\tilde y_b^L &:= y_b-\hat y_b^L = \tilde h^L,\\
\tilde y_m^L &:= \mathrm{m}_{\mathcal I}-\hat y_m^L
=
-(\mathrm{m}_{\mathcal I})^\times\tilde\lambda
+\mathcal O(|\tilde\lambda|^2).
\end{align}
\end{subequations}

Let
\[
x^L:=
\begin{bmatrix}
\tilde h^L & \tilde v_h^L & \tilde\lambda^\top
\end{bmatrix}^\top,
\qquad
u:=
\begin{bmatrix}
-\delta_1 & -\delta_2 & -\delta_3^\top
\end{bmatrix}^\top,
\]
and
\[
y^L:=
\begin{bmatrix}
\tilde y_b^L & (\tilde y_m^L)^\top
\end{bmatrix}^\top .
\]
Then the local error dynamics can be written as
\begin{subequations}
\label{eq:ltv_L}
\begin{align}
\dot x^L
&=
A^L(t)x^L+u
+\mathcal O(|\tilde\lambda||u|+|\tilde\lambda|^2),\\
y^L
&=
C^Lx^L+\mathcal O(|\tilde\lambda|^2),
\end{align}
\end{subequations}
where
\begin{equation}
A^L(t)=
\begin{bmatrix}
0 & 1 & 0_{1\times3}\\
0 & 0 & -\mathrm{e}_3^\top(\hat R^L\mathrm{a})^\times\\
0_{3\times1} & 0_{3\times1} & 0_{3\times3}
\end{bmatrix},
\label{eq:state_matrix_L}
\end{equation}
and
\begin{equation}
C^L=
\begin{bmatrix}
1 & 0 & 0_{1\times3}\\
0_{3\times1} & 0_{3\times1} & -(\mathrm{m}_{\mathcal I})^\times
\end{bmatrix}.
\label{eq:est_output_matrix}
\end{equation}
The innovation is chosen as
\[
u(t)=-K^L(t)y^L(t),
\qquad
K^L(t)=P^L(t)(C^L)^\top(Q^L)^{-1},
\]
where \(P^L(t)\) is the solution of the continuous-time Riccati equation
\begin{align}
\dot P^L
&=
A^L(t)P^L+P^L A^L(t)^\top \notag\\
&\quad
-P^L(C^L)^\top(Q^L)^{-1}C^L P^L+S^L,
\qquad P^L(0)>0.
\label{eq:creL}
\end{align}
The matrices \(Q^L>0\) and \(S^L>0\) are symmetric positive definite
weighting matrices associated with measurement and process
uncertainties, respectively.

Partitioning the gain as
\[
K^L(t)=
\begin{bmatrix}
K_{\delta_1}(t)^\top &
K_{\delta_2}(t)^\top &
K_{\delta_3}(t)^\top
\end{bmatrix}^\top,
\]
with
\[
K_{\delta_1}\in\mathbb{R}^{1\times4},\qquad
K_{\delta_2}\in\mathbb{R}^{1\times4},\qquad
K_{\delta_3}\in\mathbb{R}^{3\times4},
\]
the innovation terms are given by
\begin{subequations}
\label{eq:innovation_inputs}
\begin{align}
\delta_1(t) &= K_{\delta_1}(t)y^L(t),\\
\delta_2(t) &= K_{\delta_2}(t)y^L(t),\\
\delta_3(t) &= K_{\delta_3}(t)y^L(t).
\end{align}
\end{subequations}
The stability of the observer is governed by the well-posedness of the
CRE~\eqref{eq:creL}. Let
\[
A^{L\star}(t):=A^L(R(t))
\]
denote the state matrix evaluated along the true trajectory. If the pair
\((A^{L\star}(t),C^L)\) is uniformly observable, then
\eqref{eq:creL} admits a unique bounded positive-definite solution
\(P^L(t)\) for all \(t\ge0\). Consequently, the origin of the
linearized error system is exponentially stable, and the nonlinear
observer~\eqref{eq:observer_dyn} is locally exponentially convergent
around the true trajectory~\cite{hamel2017riccati}.

\subsubsection{Observability Analysis}\label{sec:obs_stab_analysis}
We now establish sufficient conditions for the uniform observability
(UO) of the pair \((A^{L\star}(t),C^L)\), where
\(A^{L\star}(t):=A^L(R(t))\) denotes the system matrix evaluated along
the true trajectory. Unlike the two-stage observer, the proposed
unified architecture only requires persistent excitation of the
horizontal component of the inertial acceleration. This weaker
condition avoids explicit computation of the observability Gramian and
can be verified directly from the vehicle motion.
\begin{lemma} \label{Lemma2}
Assume that the body-frame linear acceleration $\mathrm{a}(t)$ and the angular velocity $\mathrm{\omega}(t)$ are continuous and uniformly bounded. Moreover, assume that vectors $\mathrm{m}_\mathcal{I}$ and $\mathrm{e}_3$ are non-collinear. Let $J=[\mathrm{e}_1\ \mathrm{e}_2]\in\mathbb{R}^{3\times2}$ and define $a_\perp(t)=\mathrm{e}_3^\top(R(t)\mathrm{a}(t))^\times J$.
Assume there exist $\bar\delta>0$ and $\bar\mu>0$ such that, for all $t\ge0$,
\begin{equation}
\frac{1}{\bar\delta}\int_t^{t+\bar\delta}a_\perp(s)^\top a_\perp(s)\,ds
\ \ge\ \bar\mu I_2.
\label{eq:PE_condition}
\end{equation}
Then the pair $(A^{L\star}(t),C)$ is \emph{uniformly observable}.
\end{lemma}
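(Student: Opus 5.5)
The plan is to reduce the statement to a scalar persistent-excitation argument on a chain of two integrators. The magnetometer block of $C^L$ pins down every component of $\tilde\lambda$ except the one along $\mathrm{m}_{\mathcal I}$, and the barometer must recover that remaining component together with $(\tilde h^L,\tilde v_h^L)$.

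\emph{Step 1: Gramian quadratic form.} Write $b(t)^\top:=-\mathrm{e}_3^\top(R(t)\mathrm{a}(t))^\times$. Since the $\tilde\lambda$-block of $A^{L\star}$ is zero, the transition matrix has $\phi_{22}=I_3$ and $\phi_{11}$ as in \eqref{eq:phi_11}, while
\[
\phi_{12}(s,t)=\begin{bmatrix}\int_t^s (s-r)\,b(r)^\top dr\\ \int_t^s b(r)^\top dr\end{bmatrix}.
\]
For $x=(h,v,\lambda)$ we therefore get
\[
x^\top W(t,t+\tau)x=\frac1\tau\int_t^{t+\tau}\Big(f_x(s)^2+|\mathrm{m}_{\mathcal I}^\times\lambda|^2\Big)ds,
\]
where $f_x(s)=h+(s-t)v+\int_t^s(s-r)b(r)^\top\lambda\,dr$.

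\emph{Step 2: isolate the unobservable direction of the magnetometer.} Since $b(t)\perp\mathrm{e}_3$, we have $b^\top\lambda=-a_\perp J^\top\lambda$. Decompose $\lambda=\alpha\,\mathrm{m}_{\mathcal I}+\lambda_\perp$ with $\lambda_\perp\perp \mathrm{m}_{\mathcal I}$. Then $|\mathrm{m}_{\mathcal I}^\times\lambda|^2=|\lambda_\perp|^2$ (taking $|\mathrm{m}_{\mathcal I}|=1$). The contribution of $\lambda_\perp$ to $f_x$ is bounded by $c\,\tau^2|\lambda_\perp|$, using the uniform bounds on $\mathrm{a}$.

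By Young's inequality it then suffices to show that the scalar system
\[
\dot h=v,\qquad \dot v=\gamma(t)\alpha,\qquad \dot\alpha=0,\qquad y=h,
\]
is uniformly observable, where $\gamma(t):=-a_\perp(t)J^\top\mathrm{m}_{\mathcal I}$. Non-collinearity of $\mathrm{m}_{\mathcal I}$ and $\mathrm{e}_3$ gives $J^\top\mathrm{m}_{\mathcal I}\neq0$. Hence \eqref{eq:PE_condition} yields
\[
\frac1{\bar\delta}\int_t^{t+\bar\delta}\gamma^2\,ds\ge\bar\mu\,|J^\top\mathrm{m}_{\mathcal I}|^2>0,
\]
so $\gamma$ is scalar PE.

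\emph{Step 3: UO of the triple-integrator chain with scalar PE gain.} I would argue by contradiction over windows of length $\delta=N\bar\delta$. Suppose there are unit vectors $(h_k,v_k,\alpha_k)$ and times $t_k$ with Gramian values tending to $0$. Then $f_k(s)=h_k+(s-t_k)v_k+\alpha_k\int(s-r)\gamma(r)dr$ tends to $0$ in $L^2$ on the window. Splitting the window into subintervals and taking second differences of $f_k$ eliminates the affine part and leaves $\alpha_k$ times second differences of the double integral of $\gamma$. PE of $\gamma$, together with boundedness, should keep these second differences uniformly away from zero on some subinterval. This forces $\alpha_k\to0$, and then $(h_k,v_k)\to0$ because $(A_{11},C_1)$ is a constant observable pair. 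That contradicts unit norm. Alternatively, one can invoke the known UO result for chains of integrators driven by PE signals used in \cite{Hamel2017PositionMeasurements}.

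The main obstacle is this last step. A PE signal can oscillate quickly, and then its double integral is nearly affine on short windows. I would therefore first try choosing the window length (in multiples of $\bar\delta$) as a function of the uniform bound on $\gamma$, so that PE energy cannot be hidden by an affine function. If that fails, I would add the mild extra hypothesis that $\dot\gamma$ is bounded (for instance $\dot{\mathrm{a}}$ bounded), which gives equicontinuity and makes the compactness argument close.
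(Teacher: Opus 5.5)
Your Steps~1--2 are sound, but Step~3 is a genuine gap. It cannot be closed under the hypotheses as stated, because a bound on $|\gamma|$ does not bound its frequency. Neither second differences nor a clever window length can rule out that the double integral of $\gamma$ collapses on every window. Concretely, take $R\equiv I_3$, $\omega\equiv0$, $\mathrm{a}(t)=[\cos t^2,\ \sin t^2,\ -g]^\top$ and $\mathrm{m}_{\mathcal I}=[1/\sqrt2,\ 0,\ 1/\sqrt2]^\top$. Then $a_\perp=[-\sin t^2,\ \cos t^2]$. The window average of $a_\perp^\top a_\perp$ has smallest eigenvalue $\tfrac12\bigl(1-\bigl|\tfrac1{\bar\delta}\int_t^{t+\bar\delta}e^{2is^2}ds\bigr|\bigr)$, which is uniformly positive (e.g.\ for $\bar\delta=1$), so \eqref{eq:PE_condition} holds. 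Your scalar signal is $\gamma(t)=\tfrac1{\sqrt2}\sin t^2$, and integration by parts gives $\bigl|\int_t^r\gamma\bigr|\le C/t$ for all $r\ge t$. Take the unit direction $x=(0,0,\mathrm{m}_{\mathcal I})$, on which the magnetometer term vanishes. Then $x^\top W(t,t+\delta)x=\frac1\delta\int_t^{t+\delta}\bigl(\int_t^s\!\int_t^r\gamma(u)\,du\,dr\bigr)^2ds\le C^2\delta^2/t^2\to0$ for every fixed $\delta$. So the pair is not uniformly observable, and the lemma as stated is false.

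Your fallback hypothesis is therefore necessary, not optional, and any off-the-shelf integrator-chain result you might cite must carry a similar assumption. Assume $a_\perp$ uniformly continuous, e.g.\ $\dot{\mathrm{a}}$ bounded, since $\frac{d}{dt}(R\mathrm{a})=R(\omega^\times\mathrm{a}+\dot{\mathrm{a}})$. Then Step~3 closes with the paper's device:
\begin{itemize}
\item $f_k^{(2)}(s)=\alpha_k\gamma(t_k+s)$ is equicontinuous;
\item Lemma~A.1 of \cite{Pascal2017} (or Arzel\`a--Ascoli directly) gives $\alpha_k^2\int\gamma^2\to0$;
\item PE of $\gamma$ then gives $\alpha_k\to0$, and $(h_k,v_k)\to0$ follows.
\end{itemize}

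The paper's own proof silently needs the same regularity. It passes from $\int|f_{p,1}|^2\to0$ to $\int|f^{(2)}_{p,1}|^2\to0$ via Lemma~A.1 of \cite{Pascal2017}. In the example above, with $d_p=(0,0,\mathrm{m}_{\mathcal I})$ and $t_p\to\infty$, the first limit holds while $\int|f^{(2)}_{p,1}|^2=\int\gamma^2$ stays bounded away from zero. That step needs equicontinuity of $f^{(2)}_{p,1}=-\mathrm{e}_3^\top(R\mathrm{a})^\times d_2$, which continuity and boundedness of $\mathrm{a}$ do not provide. So the lemma should carry the extra assumption, and with it your argument is a complete proof.

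Your route also splits the attitude directions between the sensors differently from the paper:
\begin{itemize}
\item the paper uses the barometer with the full two-dimensional PE to kill $J^\top d_2$, and uses the magnetometer, through non-collinearity, only for the $\mathrm{e}_3$-component;
\item you let the magnetometer fix the two directions orthogonal to $\mathrm{m}_{\mathcal I}$ and ask the barometer for only one direction.
\end{itemize}
Your split gives a sharper statement. Scalar PE of $\gamma=(R\mathrm{a})^\top(\mathrm{e}_3\times\mathrm{m}_{\mathcal I})$ already suffices; this is the inertial acceleration along the horizontal direction orthogonal to the projected magnetic field. That condition is strictly weaker than \eqref{eq:PE_condition}.
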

\begin{IEEEproof}
See Appendix~\ref{appendix_C}.
\end{IEEEproof}
The excitation condition in~\eqref{eq:PE_condition} admits the
geometric interpretation illustrated in
Fig.~\ref{fig:a_perp_geometry}. The inertial acceleration
\[
a_{\mathcal I}
=
\begin{bmatrix}
a_{\mathcal I,1}&
a_{\mathcal I,2}&
a_{\mathcal I,3}
\end{bmatrix}^{\!\top}
\]
can be decomposed into its horizontal and vertical components as
\[
a_{\mathcal I}
=
\Pi_{\mathrm e_3}a_{\mathcal I}
+
(\mathrm e_3^\top a_{\mathcal I})\mathrm e_3
=
a_{\mathcal I,1,2}
+
a_{\mathcal I,3},
\]
where
\[
a_{\mathcal I,1,2}
=
\Pi_{\mathrm e_3}a_{\mathcal I}
=
\begin{bmatrix}
a_{\mathcal I,1}&
a_{\mathcal I,2}&
0
\end{bmatrix}^{\!\top}
\]
lies in the horizontal plane \(\Pi\) orthogonal to
\(\mathrm e_3\), while
\[
a_{\mathcal I,3}
=
(\mathrm e_3^\top a_{\mathcal I})\mathrm e_3
\]
is aligned with the gravity direction.

The vector \(a_\perp\) introduced in
Lemma~\ref{Lemma2} is given by
\[
a_\perp
=
\mathrm e_3^\top
a_{\mathcal I}^\times
J
=
\begin{bmatrix}
-a_{\mathcal I,2}&
a_{\mathcal I,1}
\end{bmatrix}^{\!\top},
\]
which corresponds to a \(90^\circ\) rotation of
\(a_{\mathcal I,1,2}\) within the horizontal plane.
Consequently, \(a_\perp\) carries exactly the same excitation
information as the horizontal component of the inertial acceleration.
Therefore, the observability condition of
Lemma~\ref{Lemma2} only requires persistent excitation of the
horizontal component of the inertial acceleration and imposes no
excitation requirement along the gravity direction. This condition is
strictly weaker than the AGAS condition
\eqref{eq:PE_Condition_G}, which requires persistent excitation of the
full inertial acceleration vector. The proposed unified observer can
therefore guarantee observability under significantly less restrictive
vehicle motions.

\begin{figure}[t]
\centering
\begin{tikzpicture}[scale=1.0, >=stealth]

\definecolor{myblue}{RGB}{0,70,180}
\definecolor{mygreen}{RGB}{0,130,40}
\definecolor{myred}{RGB}{200,40,30}
\definecolor{mypurple}{RGB}{120,40,60}

\coordinate(O) at (0,0);
\coordinate(e1) at (1.,1.);
\coordinate(e2) at (2.0,0);
\coordinate(e3) at (0,-1.5);

\coordinate(P1) at (-0.8,-0.35);
\coordinate(P2) at (4.2,-0.35);
\coordinate(P3) at (5.4,1.05);
\coordinate(P4) at (0.4,1.05);

\coordinate(Ah) at (2.5,0.75); 
\coordinate(A) at (2.55,2); 

\fill[blue!8] (P1)--(P2)--(P3)--(P4)--cycle;
\draw[blue!45, dashed](P1) --(P2)--(P3)--(P4)--cycle;
\node[blue!70!black] at (4.95,0.85){$\Pi$};
\draw[blue!70!black,thick]
(4.6,1) arc[start angle=170, end angle = 250, radius = 0.35];
\draw[->,thick](O) --(e1) node[above]{$\mathrm{e}_1$};
\draw[->,thick](O) --(e2) node[right]{$\mathrm{e}_2$};
\draw[->,thick](O) --(e3) node[right]{$\mathrm{e}_3$};
\node[left] at (O) {$\mathcal{I}$};

\draw[->,very thick, myblue](O)--(A)
node[above left]{$a_{\mathcal I}$};

\draw[->,very thick, mygreen](O)--(Ah)
node[midway, below]{$a_{\mathcal I,1,2}$};

\draw[->, thick, myred, dashed] (Ah) -- (A)
node[midway, right]{$a_{\mathcal I,3}$};

\draw[dashed] (A) -- (Ah);
\fill (A) circle (1.5pt);
\fill (Ah) circle (1.5pt);

\draw[->, very thick, mypurple, dashed] (Ah) -- ++(0.9,-0.98)
node[midway,right]{$a_\perp$};

\end{tikzpicture}    
\caption{Geometric interpretation of the LES Observer excitation in the inertial frame \( \mathcal{I} \). The inertial acceleration $a_{\mathcal I} \in \mathbb{R}^3$ is decomposed into a horizontal component $a_{\mathcal I,1,2} \in \mathbb{R}^3$ lying in the plane $\Pi=\mathrm{span}\{\mathrm{e}_1,\mathrm{e}_2\}$, and a vertical component $a_{\mathcal I,3} \in \mathbb{R}^3$ aligned with $\mathrm{e}_3$. The vector $a_\perp \in \mathbb{R}^2$ is the planar coordinate representation of $a_{\mathcal I,1,2}$ and carries the excitation information used in the observability condition of Lemma~\ref{Lemma2} .}
\label{fig:a_perp_geometry}
\end{figure}

The following theorem is a direct consequence of \cite[Theorem 3.1]{hamel2017riccati}; consequently, its proof is omitted.

\begin{theorem}\label{Theorem2}
Consider the system~\eqref{eq:position_dyn}-\eqref{eq:mag_measurement}, together with the nonlinear observer~\eqref{eq:observer_dyn} and the associated closed-loop error dynamics~\eqref{eq:error_dyn_first_order}, where the innovation terms are defined in~\eqref{eq:innovation_inputs} and $P^L(t)$ denotes the symmetric positive-definite solution of the Riccati equation~\eqref{eq:creL}. If the pair \((A^{L\star}(t), C^L)\) is uniformly observable, then the origin of the error system~\eqref{eq:error_dyn_first_order} is locally exponentially stable.
\end{theorem}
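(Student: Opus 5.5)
The plan is to recast the error system~\eqref{eq:error_dyn_first_order}, with the innovation $u=-K^L y^L$, as a perturbation of a linear time-varying Riccati/Kalman-type error system, and then apply a Lyapunov argument built on $P^L(t)^{-1}$, following the structure of \cite[Theorem 3.1]{hamel2017riccati}.

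First, I establish the uniform bounds on $P^L(t)$. The matrix $A^L$ in \eqref{eq:creL} is evaluated along the estimated trajectory, through $\hat R^L$. Along the true trajectory it equals $A^{L\star}(t)$, which is uniformly observable with $C^L$ by hypothesis. Because $\mathrm{a}$ is bounded, $A^L(\hat R^L)-A^{L\star}=O(|\tilde\lambda|)$ uniformly. Uniform observability is robust to small bounded perturbations of $A$: the Gramian depends continuously on $A$ over a fixed window $\delta$. Hence there is $\epsilon>0$ such that $(A^L(t),C^L)$ remains uniformly observable, with constants $\mu/2$, whenever $|\tilde\lambda(s)|\le\epsilon$ on the relevant window. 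Under this condition the standard CRE results (bounded $A$, $S^L>0$, uniform observability) yield constants $0<p_m\le p_M$ with $p_m I\le P^L(t)\le p_M I$ for all $t$ as long as the trajectory stays in the $\epsilon$-neighborhood.

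Next comes the Lyapunov analysis. I set $V=(x^L)^\top (P^L)^{-1}x^L$. Using $\frac{d}{dt}(P^L)^{-1}=-(P^L)^{-1}\dot P^L (P^L)^{-1}$ and the closed-loop linear part $\dot x^L=(A^{L\star}-K^LC^L)x^L+\text{h.o.t.}$, the standard computation gives
\begin{equation*}
\dot V=-(x^L)^\top\big((C^L)^\top (Q^L)^{-1}C^L+(P^L)^{-1}S^L(P^L)^{-1}\big)x^L+\rho(t,x^L),
\end{equation*}
so the quadratic part is at most $-\frac{s_m}{p_M^2}|x^L|^2$. The remainder $\rho$ collects three contributions: the $\mathcal O(|\tilde\lambda|^2)$ terms in dynamics and output, the $\mathcal O(|\tilde\lambda||u|)$ term with $|u|\le \|K^L\|\,|y^L|=O(|x^L|)$, and the mismatch $A^L-A^{L\star}=O(|\tilde\lambda|)$ multiplying $x^L$. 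Each contributes $O(|x^L|^3)$ with constants depending on $p_m,p_M$ and the bounds on $\mathrm{a},\mathrm{m}_{\mathcal I}$. Therefore $\dot V\le -c_1|x^L|^2+c_2|x^L|^3$.

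Finally, I close the argument. For $|x^L|\le \min(\epsilon, c_1/(2c_2))$ we obtain $\dot V\le -\frac{c_1}{2}|x^L|^2\le -\frac{c_1}{2p_M}V$. Since $V$ is equivalent to $|x^L|^2$, a sublevel set of $V$ contained in this ball is forward invariant. This invariance also retroactively justifies the bounds on $P^L$ from the first step, which closes the circularity by a continuation argument. Exponential decay of $V$ then gives local exponential stability of the origin.

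The main obstacle is that circularity: the bounds on $P^L$ require $\tilde\lambda$ to stay small, while smallness of $\tilde\lambda$ requires the bounds on $P^L$. I would handle it by taking the maximal interval on which $|x^L|\le r$, and on that interval deriving both the $P^L$ bounds and the decay of $V$. The robustness of uniform observability under $O(r)$ perturbations of $A$ must be made uniform in $t$; this follows from Gronwall bounds on the transition matrix over windows of length $\delta$.
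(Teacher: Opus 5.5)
Your proposal is correct and follows essentially the same route as the paper, which omits the proof and simply invokes \cite[Theorem~3.1]{hamel2017riccati}. You unpack exactly that argument: robustness of uniform observability under the $\mathcal O(|\tilde\lambda|)$ mismatch between $A^L(\hat R^L)$ and $A^{L\star}$, uniform bounds on $P^L$ obtained through a continuation argument, and the Lyapunov function $(x^L)^\top (P^L)^{-1}x^L$ with cubic remainders. The only slip is cosmetic and does not affect the conclusion: since $V\le |x^L|^2/p_m$, the decay estimate should read $\dot V\le -\tfrac{c_1p_m}{2}V$ rather than $-\tfrac{c_1}{2p_M}V$.
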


Theorem~\ref{Theorem2} establishes that the Riccati-based gain $K^L(t)$, synthesized from the linearized time-varying error model, is sufficient to guarantee local exponential stability of the origin of the nonlinear error dynamics~\eqref{eq:error_dyn_first_order}, provided the uniform observability condition of Lemma~\ref{Lemma2} holds. Consequently, the observer gain can be synthesized entirely from the linearized error dynamics while retaining local exponential stability of the original nonlinear estimation error system.

\subsubsection{Discrete-Time Implementation}\label{sec:implementation_L}

Similarly to the AGAS observer implementation in Section \ref{sec:implementation_G},  the LES observer is implemented in discrete time with sampling period~$T$ corresponding to the IMU rate. Over each interval $[t_k,t_{k+1}]$, the measured angular velocity $\mathrm{\omega}_k$ and the specific force $\mathrm{a}_k$ are assumed constant, consistent with a zero-order hold approximation.
In addition, the discrete process noise covariance is approximated as $S^L_{d,k}\approx S^L_k T$, and the estimated attitude is treated as piecewise constant, i.e., \(\hat R^L(t) \approx \hat R^L_k\) for \( t \in [t_k,t_{k+1}]\).
Under these assumptions, a first-order discretization of the continuous-time state matrix $A^L(t)$ yields
\begin{equation}
A^L_{d,k} \approx
\begin{bmatrix}
1 & T  & 0_{1\times 3} \\
0 &  1 & -T\mathrm{e}_{3}^{\top}(\hat R^L_k \mathrm{a}_k)^\times \\
0_{3\times 1}& 0_{3\times 1} &I_3
\end{bmatrix}. \label{eq:Ad}
\end{equation}
The output matrix $C^L$ is obtained at the measurement timestamp as follows
\begin{equation}
C^L_{k} = \begin{bmatrix}C_{b,k}^\top , C_{m,k}^\top\end{bmatrix}^\top \label{eq:Ck}
\end{equation}
where
\begin{equation}
 C_{b,k}
= \begin{bmatrix}
1 & 0 & 0_{1 \times 3}
\end{bmatrix}, \label{eq:cbk}
\end{equation}
\begin{equation}
 C_{m,k}
= \begin{bmatrix}
0_{3 \times 1} & 0_{3 \times 1} & -(\mathrm{m}_{\mathcal{I}})^{\times}
\end{bmatrix}. \label{eq:cmk}
\end{equation}
The resulting discrete-time implementation of the proposed LES observer, summarized in Algorithm~\ref{algo:Discrete_proposed_obs_L}, 
follows a prediction--correction structure based on the matrices~in \eqref{eq:Ad}-\eqref{eq:Ck}. The nonlinear dynamics in~\eqref{eq:observer_dyn} are discretized at the  IMU frequency using exponential-Euler and forward Euler step integrations on $\mathrm{SO}(3)\times\mathbb{R}^2$ \cite{mahony2008nonlinear}(see lines~25--27 of Algorithm~\ref{algo:Discrete_proposed_obs_L}).
\begin{algorithm}[!t]
\caption{Discrete-Time Implementation of the proposed unified observer}
\label{algo:Discrete_proposed_obs_L}
\begin{algorithmic}[1]
\Statex \textbf{Inputs:} $x^L_{0|0}, P^L_{0|0},\hat R^L_0, \hat v^L_{h,0}, \hat h^L_{0}$; $g_k,\mathrm{a}_k,\mathrm{\omega}_k, \mathrm{m}_{\mathcal{B}_k}$; $y_{m,k}$; $y_{b,k}$; $T$; $Q^L_k$; $S^L_k$.
\Statex \textbf{Outputs:} $\,\hat R^L_{k},\,\hat v^L_{h,k}, \, \hat h^L_k, \,$ \(\forall k \in \mathbb{N}\)
\For{each time \(k \geq 0\)}
\If{IMU data \(\mathrm{a}_k\), \(\omega_k\) is available}
    \State {$A^L_{d,k} \gets $~\eqref{eq:Ad};} \quad $S^L_{d,k}\gets S^L_k\,T$;
    \State{\(P^L_{k+1|k} \gets A^L_{d,k} P^L_{k|k} (A^L_{d,k})^\top + S^L_{d,k}.\)}
\EndIf
\If{$y_{m,k}$ and $y_{b,k}$ are all available}
\State $C_{m,k}~\gets$ \eqref{eq:cmk}; $C_{b,k}~\gets$ \eqref{eq:cbk}; $C^L_k \gets$~\eqref{eq:Ck};
\State $y^L_k\gets \begin{bmatrix} y_{b,k} - \hat h^L_{k}\\ y_{m,k} - \hat R^L_k \mathrm{m}_{\mathcal{B}_k}
\end{bmatrix};$ 
\State $Q^L_k \gets \mathrm{blkdiag}(Q_{b,k},Q_{m,k})$.
\ElsIf{Barometer measurement is available}
\State $C_{b,k} \gets$~\eqref{eq:cbk}; $C^L_k \gets C_{b,k}$;
\State \( y^L_{k} \gets y_{b,k} - \hat h^L_{k}\); $Q^L_{k} \gets Q_{b,k}$.
\ElsIf{Magnetometer measurement is available}
\State $C_{m,k} \gets$~\eqref{eq:cmk}; $C^L_k \gets C_{m,k}$;
\State \( y^L_{k} \gets  y_{m,k} - \hat R^L_k \mathrm{m}_{\mathcal{B}_k}\); $Q^L_{k} \gets Q_{m,k}$.
\EndIf
\If{ $y_{b,k}$ or $y_{m,k}$ or all are available}
\State $K^L_k = P^L_{k+1|k}\,(C^L)_k^\top \big(C^L_k P^L_{k+1|k} (C^L)_k^\top + Q^L_k\big)^{-1}$;
\State $P^L_{k+1|k+1}=\left(I_5 - K^L_k C^L_k \right)P^L_{k+1|k}$;
\State $u_k \gets-K^L_k y^L_k$; $\delta_{1,k}\ \gets -u_k(1)$; $\delta_{2,k} \gets -u_k(2)$ $\delta_{3,k} \gets -u_k(3{:}5)$.
\Else
    \State $u_k\gets 0$;$P^L_{k+1|k+1}\gets P^L_{k+1|k}$;$\delta_{1,k},\delta_{2,k},\delta_{3,k} \gets 0$. 
\EndIf
\State $P^L_{k+1|k+1}\gets \frac{1}{2}(P^L_{k+1|k+1} + (P^L)_{k+1|k+1}^{\top})$
\State $\hat R^L_{k+1} \gets \hat R^L_k \exp\!\big((\mathrm{\omega}_k + (\hat R^L)_k^\top \delta_{3,k})^\times T\big)$;
\State $\hat v^L_{h, k+1} \gets \hat v^L_{h,k}+T\left( g + \mathrm{e}_3^\top \hat R^L_k\mathrm{a}_k + \delta_{2,k} \right )$;
\State $\hat h^L_{k+1} \gets \hat h^L_k + T\left(v^L_{h,k}+ \delta_{1,k} \right)$
\EndFor
\end{algorithmic}
\end{algorithm}

\section{Simulation and Experimental Results}\label{sec:sim_exp}

This section evaluates the proposed observers using both simulated and experimental data. The discrete-time implementations introduced in Sections~\ref{sec:implementation_G} and \ref{sec:implementation_L} are applied to two distinct datasets: one generated through numerical simulations and another obtained from measurements collected during real flight experiments. To assess the practical implications of the LES and AGAS properties, the two observers are compared in terms of convergence behavior, steady-state accuracy, and robustness to initialization. The corresponding uniform observability conditions are evaluated by computing the condition number of the associated persistent excitation matrices:
\begin{equation}
M^G(t_k,t_{k+1}) := \frac{1}{\bar{\delta}}\int_t^{t+\bar{\delta}}a_{\mathcal{I}}(s)a_{\mathcal{I}}(s)^\top\,ds, \label{eq:cond_numb_AGAS}
\end{equation}

\begin{equation}
M^L(t_k,t_{k+1}) := \frac{1}{\bar\delta}\int_t^{t+\bar\delta}a_\perp(s)^\top a_\perp(s)\,ds, \label{eq:cond_numb_LES} 
\end{equation}
over the interval $[t_k,t_{k+1}]$ with $t_k =k\delta$, $k$ a positive integer, $\delta = 2 (s)$.

\subsection{Simulation Results}
The simulation considers a rigid-body vehicle equipped with an IMU and a barometer evolving in three-dimensional space. The ground-truth angular velocity $(\mathrm{rad/s})$ and inertial altitude $(\mathrm{m})$ are defined as
\[
\omega(t)=
\begin{bmatrix}
0.4\sin(0.5t)\\
0.5\sin(0.3t+\pi/4)\\
0.3\sin(0.7t+\pi/3)
\end{bmatrix},
\qquad
h(t)=-\frac{5\sqrt{3}}{4}\sin(2t),
\]
while the body-frame specific force $(\mathrm{m/s^2})$ is generated according to
\[
a(t)=R^\top
\left(
\begin{bmatrix}
-\cos(t)\\
-\sin(2t)\\
5\sqrt{3}\sin(2t)
\end{bmatrix}
-g
\right),
\]
with the initial condition $R(0)=I_3$.

Two scenarios are considered to investigate the complementary convergence properties of the two observers. Scenario~1 corresponds to small initial estimation errors, whereas Scenario~2 considers large initialization errors. For each scenario, $50$ Monte Carlo simulations are performed, with the initial estimates randomly drawn from Gaussian distributions whose parameters are summarized in Table~\ref{tab:init_conditions_sim}. The initial attitude estimates $\hat R^L(0)$ and $\hat R^G(0)$ are parameterized by Euler angles (yaw, pitch, roll), each perturbed according to the corresponding standard deviation reported in the table. The initial Riccati matrices $P^L(0)$ and $P^G(0)$ are chosen as specified in Table~\ref{tab:initial_P}.

The inertial magnetic field is set to
$
m_{\mathcal I}=
\begin{bmatrix}
1/\sqrt{2} & 0 & 1/\sqrt{2}
\end{bmatrix}^{\top}.
$
The IMU measurements are sampled at $250 \ \mathrm{Hz}$ and corrupted by zero-mean Gaussian noise with standard deviations of $0.1\ \mathrm{m/s^2}$ for the accelerometer and $0.05\ \mathrm{rad/s}$ for the gyroscope on each axis. The body-frame magnetometer measurements, available at $50\ \mathrm{Hz}$, are corrupted by zero-mean Gaussian noise with standard deviation
$
\sigma_m=
\begin{bmatrix}
0.02 & 0.02 & 0.02
\end{bmatrix}^{\top},
$
yielding the covariance matrix
$
Q_m=\operatorname{diag}
(\sigma_{m,x}^2,\sigma_{m,y}^2,\sigma_{m,z}^2).$
The barometer is sampled at $5\ \mathrm{Hz}$ and corrupted by zero-mean Gaussian noise with covariance
$
Q_b=\sigma_b^2=2.5\times10^{-3}.
$

The observer parameters are summarized in Table~\ref{tab:observer_gains}. For the AGAS observer, the tilt and magnetic corrections are governed by the gains $k_z$ and $k_m$, respectively, while the process covariance is selected as
\[
S^G=\operatorname{diag}(10^{-1},10^{-1},10^{-2},10^{-2},10^{-2}).
\]

The simulation results shown in Fig.~\ref{fig:Sim_Euler_Angles_Att_Error_Scenario_1}--\ref{fig:Sim_Euler_Angles_Att_Error_Scenario_2} demonstrate that both observers successfully recover the vehicle attitude in both initialization scenarios.

For Scenario~1, corresponding to small initial estimation errors, the LES observer converges significantly faster than the AGAS observer. The attitude estimation error reaches negligible values after approximately $9\mathrm{s}$, compared with about $14\mathrm{s}$ for the AGAS observer, while exhibiting a smaller transient envelope. This behavior is consistent with the local exponential convergence predicted by Theorem~\ref{Theorem2}. The Euler-angle estimates remain smooth throughout the transient and rapidly converge to the true attitude.

For Scenario~2, both observers recover from large initial attitude errors and converge to the true attitude. The LES observer again reaches the steady state more rapidly, whereas the AGAS observer exhibits a smoother transient with reduced overshoot. A temporary discontinuity in the AGAS yaw estimate over the interval $t\in[6.6,\,8.3]\mathrm{s}$ results from the magnetic correction, while the LES observer displays larger transient oscillations in the yaw and roll estimates over $t\in[5.3,\,8.3]\mathrm{s}$. These oscillations reflect the reduced accuracy of the local linearization under large initial misalignment and disappear once the estimation error enters the neighborhood in which local exponential convergence is guaranteed. In steady state, both observers achieve comparable estimation accuracy.

These observations are consistent with the persistent excitation analysis shown in Fig.~\ref{fig:Sim_PE_Condition}. The excitation matrix associated with the AGAS observer remains poorly conditioned throughout the trajectory, with $\log(\operatorname{cond}(M^G))\approx31$, whereas the excitation matrix associated with the LES observer remains consistently well conditioned, with $\log(\operatorname{cond}(M^L))\approx0$. Consequently, the Riccati-based LES observer exploits more informative error dynamics, resulting in faster convergence whenever the initial estimation error lies within its region of attraction. In contrast, the AGAS observer retains its almost-global convergence properties despite the weaker excitation, at the expense of slower transient dynamics.
\begin{table}
\centering
\caption{Initial conditions used for the simulation scenarios}
\label{tab:init_conditions_sim}
\renewcommand{\arraystretch}{1.1}
\begin{tabular}{lcc}
\hline
\textbf{Quantity} & \textbf{Scenario 1} & \textbf{Scenario 2}\\
\hline
$\hat{R}^L(0),\hat R^G (0)$ 
& $[15^{\circ},\,9^{\circ},-9^{\circ}]$ 
& $[60^{\circ},\,-30^{\circ},\,45^{\circ}]$\\
Std. of $\hat{R}^L(0),\hat R^G (0)$ 
& $5^{\circ}$ per axis 
&$100^{\circ}$ per axis\\
$\begin{bmatrix}\hat h^L(0)\\ \hat v^L_h(0) \\- \end{bmatrix},$ $\hat x^G$ & 
$\begin{bmatrix}0.5\ \mathrm{m}\\0.5\ \mathrm{m/s} \\ \hat R^G(0)^{\top}\mathrm{e}_3)\\\end{bmatrix}$ 
& $\begin{bmatrix}5\ \mathrm{m}\\5\ \mathrm{m/s}\\ \hat R^G(0)^{\top}\mathrm{e}_3 \end{bmatrix}$\\
Std. of $\begin{bmatrix}\hat h(0)\\ \hat v_h(0) \\ - \end{bmatrix},\hat x^G(0)$ & 
$\begin{bmatrix}1\ \mathrm{m}\\1\ \mathrm{m/s}\\0.05\\0.05\\0.05\end{bmatrix}$ 
& $\begin{bmatrix}8\ \mathrm{m}\\8\ \mathrm{m/s}\\0.5\\0.5\\0.5\end{bmatrix}$\\
\hline
\end{tabular}{}
\end{table}

\begin{table}
\centering
\caption{Initial Riccati Matrices used in simulation}
\label{tab:initial_P}
\renewcommand{\arraystretch}{1}
\begin{tabular}{lcc}
\hline
\textbf{Scenario} & $P^G(0)$ & $P^L(0)$\\
\hline
Scenario~1 
& $\operatorname{diag}(1,1,10^{-2}\cdot(1,1,1))$ 
& $\operatorname{diag}(1,1,10^{-2}\cdot(1,1,1))$ \\
Scenario~2 
& $\operatorname{diag}(25,25,1,1,1)$ 
& $\operatorname{diag}(25,25,1,1,1)$ \\
\hline
\end{tabular}{}
\end{table}

\begin{table}
\centering
\caption{Observer tuning parameters used for the simulation scenarios}
\label{tab:observer_gains}
\renewcommand{\arraystretch}{1.1}
\begin{tabular}{lcc}
\hline
\textbf{Parameters} & \textbf{AGAS Observer} & \textbf{LES Observer}\\
\hline
Tilt gain, $k_z$ 
& $8$ 
& --\\
Magnetic gain, $k_m$ 
& $2.5$ 
& --\\
Process covariance 
& $S^G$ 
& $S^L= S^G$\\
Measurement covariance & 
$Q^G=Q_b$ 
& $Q^L =\operatorname{diag}(Q_b,Q_m)$\\
\hline
\end{tabular}{}
\end{table}

\begin{figure}[!t]
    \centering
    \includegraphics[width=\linewidth]{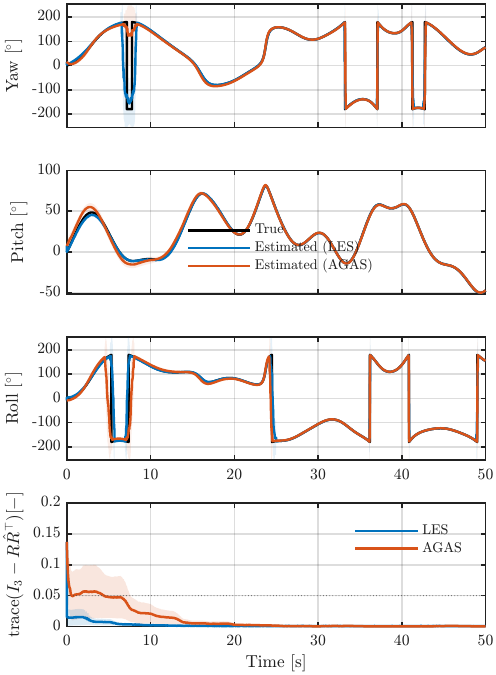}
    \caption{Results using the simulated data. Euler angles and attitude error under small initial values (Scenario~1).}
    \label{fig:Sim_Euler_Angles_Att_Error_Scenario_1}
\end{figure}

\begin{figure}[!t]
    \centering
    \includegraphics[width=\linewidth]{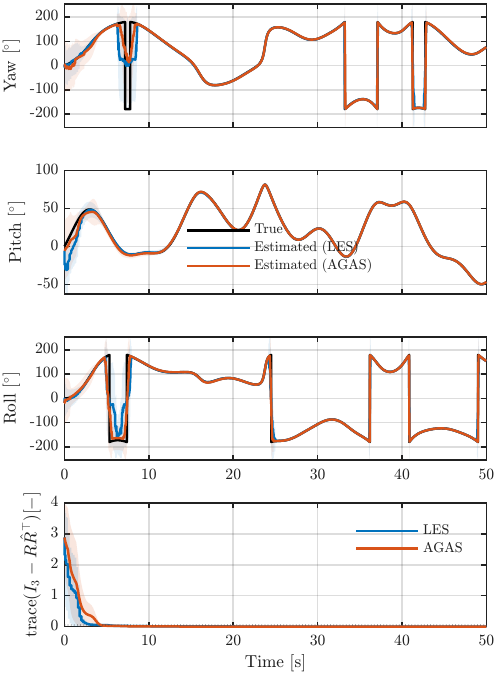}
    \caption{Results using the simulated data. Euler angles and attitude error under large initial values (Scenario~2).}
    \label{fig:Sim_Euler_Angles_Att_Error_Scenario_2}
\end{figure}

\begin{figure}[!t]
    \centering
    \includegraphics[width=\linewidth]{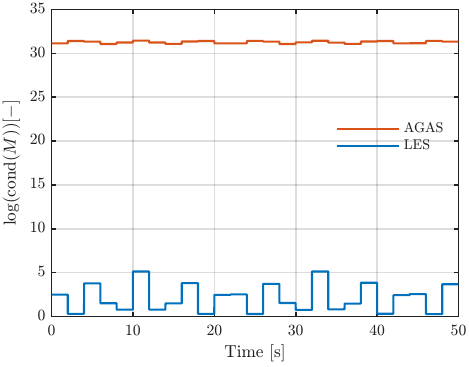}
    \caption{Evolution of cond$(M^L)$ for LES-based observer and cond$(M^G)$ for AGAS-based observer over the time from the simulated data.}
    \label{fig:Sim_PE_Condition}
\end{figure}

\subsection{Experimental Results}
We have shown that, under controlled conditions, both the Barometer--IMU LES- and AGAS-based observers achieve stable and convergent attitude estimation, even under weak excitation conditions. We now evaluate their performance using flight data collected from a MakeFlyEasy Fighter VTOL UAV, shown in Fig.~\ref{fig:veh_baro_imu}, with particular emphasis on robustness to large initial attitude estimation errors.
Flight data were recorded using a \textit{Pixhawk} flight controller running the PX4 autopilot~\cite{meier2015px4}. The onboard sensor suite includes an IMU providing specific force and angular rate measurements \(\mathrm a(t)\) and \(\mathrm{\omega}(t)\), a three-axis magnetometer \( \mathrm{m}_{\mathcal{B}}(t)\in \mathbb{S}^2\) and an altimeter sensor $h(t)$. The reference vertical velocity $v_h(t) :=\dot h(t)$ is obtained by differentiating the measured signal $h(t)$.
The autopilot records flight logs for each onboard sensor and provides estimates of the vehicle attitude \(\bar{R}\) computed internally by an extended Kalman filter \cite{px4_ecl_ekf_tuning}. 

To isolate the contribution of inertial and barometric measurements while avoiding the influence of magnetic disturbances, the attitude estimate $\bar R$ provided by the autopilot is used solely to reconstruct an equivalent inertial magnetic field according to
$
\mathrm{m}_{\mathcal{I}} := \bar R\,\mathrm{m}_{\mathcal{B}}.
$
The reconstructed inertial magnetic field is treated as a known reference for evaluating the full attitude estimation error and is not used by either observer.

The observers are initialized over a selected flight segment, and their performance is assessed using standard convergence and estimation metrics. Let $e_\phi(k)$, $e_\theta(k)$, $e_\psi(k)$ denote the roll, pitch, and yaw estimation errors, respectively, defined as the pointwise differences between estimated and ground-truth Euler angles at time step $k \in \mathbb{N}$. Let $N$ denote the total number of samples in the selected flight segment and $e_{\mathrm{att}}(k) = \operatorname{trace}\!(I_{3} - \bar R(k)\hat R(k)^\top)$ the full attitude estimation error, where $\bar R (k)$ and $\hat R (k)$ denote the ground-truth and estimated rotation matrices at time step $k$, respectively, with $\hat R (k):=\hat R^G(k)$ for AGAS observer and $\hat R (k):=\hat R^L(k)$ for LES observer. The convergence time $t_c$ is defined as the minimum time at which the attitude error $e_{\mathrm{att}}(t)$ enters and permanently remains within a prescribed tolerance band $\epsilon = 0.05$,  i.e.,
\[
t_c  = \min \{t_k : |e_{\mathrm{att}}(t_j)| < \epsilon, \forall j \ge k\}.
\]
The corresponding sample index is denoted by $k_c$.
The maximum attitude estimation error over the full segment is defined as 
\[
e_{\mathrm{att},\max} = \max_{0 \leq k \leq N} \left|e_{\mathrm{att}}(k) \right|,
\] 
while the steady-state attitude error is computed as the mean absolute error over the post-convergence interval$[k_c,N]$: 
\[
e_{\mathrm{att},\mathrm{ss}} = \frac{1}{N-k_c+1}\sum_{k=k_c}^N \left|e_{\mathrm{att}}(k) \right|.
\]
The per-axis estimation accuracy is quantified using the \textit{Root Mean Square Error} (RMSE). The full-segment RMSE values for roll, pitch, and yaw, are defined as
\[
\mathrm{RMSE}_{\beta} = \sqrt{\frac{1}{N}\sum_{k=1}^Ne_{\beta}^2(k)}, 
\]
where $\beta=\{\phi,\theta,\psi\}$. The steady-state RMSE is computed analogously over the interval $[k_c,N].$

Together, these metrics provide a quantitative assessment of the convergence behavior, transient response, and asymptotic estimation accuracy of the proposed observers.
To further quantify the relative performance of the filters, the percentage improvement of the LES observer with respect to the AGAS observer is computed for each performance metric according to 
\[
\mathrm{Improvement}(\%) = 100 \cdot \frac{\mathcal{M}_{\mathrm{AGAS}}-\mathcal{M}_{\mathrm{LES}}}{\mathcal{M}_{\mathrm{AGAS}}},
\]
where $\mathcal{M}_{\mathrm{AGAS}}$ and $\mathcal{M}_{\mathrm{LES}}$ denote the values of the considered metric obtained with AGAS and LES observers, respectively. Since smaller values indicate better estimation performance for all considered metrics, a positive percentage corresponds to an improvement of the LES observer over the AGAS observer.

For both observers, the initial time is set to $t_0 = 80\mathrm{s}$. At this time, the ground-truth altitude and vertical velocity are $h(t_0)=140.64\ \mathrm{m}$ and $v_h(t_0) = -1.43\ \mathrm{m/s}$, respectively. The ground-truth attitude $\bar R (t_0)$ corresponds to the Euler angles $(\mathrm{yaw},\mathrm{pitch},\mathrm{roll}) = (138.44 \mathrm{^\circ},-3.72 \mathrm{^\circ}, 11.43 \mathrm{^\circ})$. The corresponding ground-truth tilt direction is $\bar z(t_0) = \bar R(t_0)^{\top}\mathrm{e}_3 = [0.065\,\,0.2\,\,0.98]^{\top}$.
A Monte Carlo simulation with $50$ runs is performed. In each run, the initial estimates are randomly sampled from Gaussian distributions. For the LES observer, the initial altitude and vertical velocity estimates are centered at $\hat h^L (t_0) =h(t_0) + 10\ \mathrm{m} $ and $\hat v_h^L (t_0) =v_h(t_0)+ 3\ \mathrm{m/s}$ with standard deviations of $8\ \mathrm{m}$ and $8\ \mathrm{m/s}$, respectively. The initial attitude estimate $\hat R^L(t_0)$ is generated from $(\mathrm{yaw},\mathrm{pitch},\mathrm{roll}) =(60^\circ,-30^\circ, 45^\circ)$ with a standard deviation of $60^\circ$ applied independently to each axis. For the AGAS observer, the initial attitude estimate $\hat R^G (t_0)$ is generated using the same nominal Euler angles $(60^\circ,-30^\circ, 45^\circ)$, with independent perturbations of a $60^\circ$ standard deviation on each axis. The initial state estimates is centered at $\hat x^G(t_0) =\begin{bmatrix} \hat h^L (t_0)&\hat v_h^L (t_0)&\hat z(t_0)\end{bmatrix}^\top$ with standard deviation $\begin{bmatrix} 8&8&0.5&0.5&0.5\end{bmatrix}^\top$, where $\hat z(t_0) = \hat R^G(t_0)^{\top}\mathrm{e}_3 = [0.41\,\,-0.91\,\,-0.024]^{\top}$.
The Riccati matrices are initialized according to the initial estimation uncertainty. Specifically, \[P^L(t_0) = \operatorname{diag}\left(\tilde h^L(t_0)^2,\tilde v_h^L(t_0)^2,\sigma_{\lambda}(t_0)^2, \sigma_{\lambda}(t_0)^2,\sigma_{\lambda}(t_0)^2\right),\] and \[P^G(t_0) = \operatorname{diag}\left(\tilde h^L(t_0)^2,\tilde v_h^L(t_0)^2,\sigma_z(t_0)^2, \sigma_z(t_0)^2,\sigma_z(t_0)^2\right),\] where \[\sigma_{\lambda}(t_0) =\frac{|\tilde \lambda (t_0)|}{\sqrt{3}}, \qquad \sigma_z(t_0) = \frac{|\bar z(t_0) - \hat z(t_0)|}{\sqrt{3}}.\]
The IMU measurements $\mathrm{a}(t)$ and $\mathrm{\omega}(t)$ are discretized at $250\ \mathrm{Hz}$. The body-frame magnetometer measurement $\mathrm{m}_{\mathcal{B}}(t)$ is available at $50\ \mathrm{Hz}$ with estimated standard deviation $\sigma_m = 10^{-3}[10.6\,\,7.5\,\,11.8]^\top$ yielding the measurement covariance matrix $Q_m := \operatorname{diag}(\sigma_{m,x}^2,\sigma_{m,y}^2, \sigma_{m,z}^2)$. The barometric altitude measurement is sampled at $5\ \mathrm{Hz}$ with estimated standard deviation $\sigma_b = 0.13\ \mathrm{m}$, resulting the scalar covariance $Q_b := \sigma_b^2$. The observer gains and covariance parameters are selected as $S^L=\operatorname{diag}(1,1,0.1,0.1,0.1)$, $Q^L = \operatorname{diag}(Q_b,Q_m)$for the LES observer.
For the AGAS observer $S^G=S^L$, $Q^G=Q_b$, together with $k_z = 8$ and $k_m = 1$.

The results presented in Fig.~\ref{fig:Flight_Euler_Angles_Att_Error_large} and Table~\ref{tab:flight_perf_comparison} demonstrate that the attitude estimates $\hat R^G$ and $\hat R^L$ remain bounded and converge towards the ground-truth attitude $\bar R$ despite the large initialization errors introduced at $t_0 = 80\mathrm{s}$. This behavior is reflected by the bounded yaw, pitch, and roll estimates and progressive reduction of the corresponding attitude estimation errors $e_{att}$ throughout the experiment. Such an outcome is non-trivial given the magnitude of the initial perturbations, which place both estimators far from the true state and therefore constitute a demanding validation scenario. Nevertheless, clear and structured differences emerge between the two approaches across all the phases of the estimation process.
During the transient phase, LES observer exhibits a shorter convergence time, lower peak attitude error, and smaller full-segment RMSE values across all Euler-angle channels. These results are consistent with the respective stability properties of the two proposed observers predicted by Theorems~\ref{Theorem1} and \ref{Theorem2}. The LES architecture is designed to achieve local exponential convergence and therefore promotes rapid contraction of the estimation errors. In contrast, the AGAS observer is designed to provide almost-global convergence convergence from arbitrary initial attitude estimates. Consequently, its correction mechanism prioritizes robustness over a large portion of the rotation manifold rather than maximizing local convergence speed. The slower transient response of AGAS can therefore be interpreted as the counterpart of its strong almost-global convergence guarantees.
The observability analysis further explains the performance differences observed during initialization. As shown in Fig.~\ref{fig:Flight_PE_Condition}, both observers experience poorest conditioning over $t \in [82,\,90]\mathrm{s}$, where $\log\left(\mathrm{cond}(M)\right)>4$, with $M := M^L$ for LES and $M:= M^G$ for AGAS, corresponding to condition numbers on the order of $10^4$. Such values indicate a highly ill-conditioned estimation problem and a weakly satisfied persistent excitation conditions. Under these circumstances, the relaxed observability condition associated with the LES observer becomes particularly advantageous, enabling more effective exploitation of the available measurement information when excitation is limited.
As the flight progresses over $t \in [90,\,110]\mathrm{s}$, successive maneuvers enrich the excitation and substantially improve the conditioning of the estimation problem. The condition numbers of both estimators decrease accordingly, and the performance gap progressively narrows. Both observers converge to a small neighborhood of zero attitude error and maintain accurate attitude reconstruction throughout the remainder of the flight segment.
The steady-state metrics indicate that the AGAS observer achieves a slightly lower aggregate attitude error and improved roll and yaw RMSE values, while LES retains a modest advantage in pitch. These results suggest that, once sufficient excitation is available and both observers have converged, the estimation performance becomes primarily governed by the respective correction structures of the two architecture rather than by observability limitations.
Overall, the experimental results provide strong empirical support for the theoretical developments of this work. In particular, the relaxed observability condition of the LES observer translates into measurable improvements in convergence speed and transient accuracy even under limited excitation, while the AGAS demonstrates reliable recovery from large initialization errors together with competitive steady-state performance. These findings confirm that the distinct stability and observability properties of the two observers are directly reflected in their practical behaviors under real-flight conditions.

\begin{figure}[!t]
    \centering
    \includegraphics[width=\linewidth]{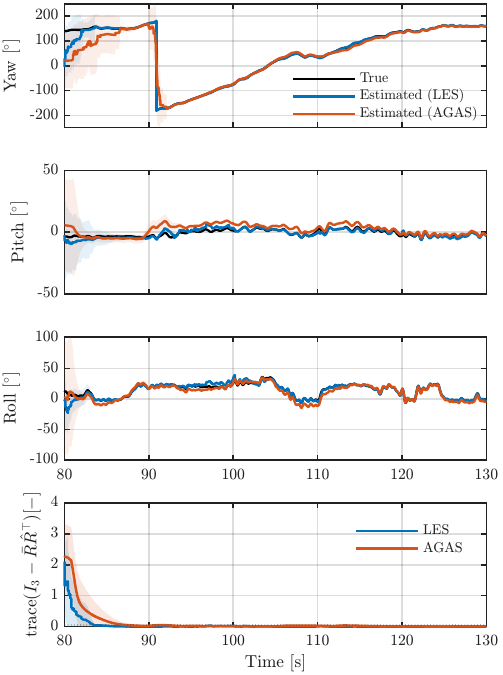}
    \caption{Results using the real flight data. Euler angles and attitude error under large initial values.}
    \label{fig:Flight_Euler_Angles_Att_Error_large}
\end{figure}

\begin{figure}[!t]
    \centering
    \includegraphics[width=\linewidth]{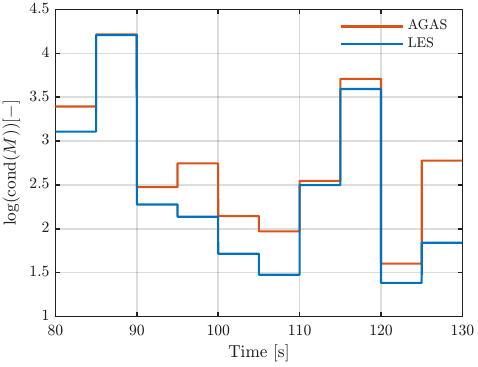}
    \caption{Evolution of cond$(M^L)$ for LES-based observer and cond$(M^G)$ for AGAS-based observer over the time from the real flight data.}
    \label{fig:Flight_PE_Condition}
\end{figure}

\begin{table}[t]
\centering
\caption{Performance Comparison between LES and AGAS observers from flight data}
\label{tab:flight_perf_comparison}
\begin{tabular}{lccc}
\hline
\textbf{Metric} & \textbf{LES} &  \textbf{AGAS} & Improvement ($\%$)\\
\hline
$t_c$ $[\mathrm{s}]$ &\textbf{$83$} & $88$ &$+5.6$\\
$e_{\mathrm{att,\max}}[-]$ &\textbf{$1.98$} &$2.23$ &$+11.2$\\
$e_{\mathrm{att,ss}}[-]$ &$0.0015$ &\textbf{$0.0014$} &$-7.1$\\
RMSE$_{\phi}$$[^\circ]$ &\textbf{$1.5$} &$2.2$ &$+32$\\
RMSE$_{\theta}$$[\circ]$ &\textbf{$2$} &$3$ &$+33.3$\\
RMS$_{\psi}$$[^\circ]$ &\textbf{$3$} &$5$ &$+40$\\
RMSE$_{\phi,\mathrm{ss}}$$[^\circ]$ &$0.4$ &\textbf{$0.25$} &$-60$\\
RMSE$_{\theta,\mathrm{ss}}$ $[^\circ]$ &\textbf{$0.5$} &$0.8$ &$+37.5$\\
RMSE$_{\psi,\mathrm{ss}}$$[^\circ]$& $0.9$&\textbf{$0.56$} & $-60$\\
\hline
\end{tabular}
\end{table}

\section{Conclusion}\label{sec:concl}
This paper investigated barometer-aided attitude estimation as a lightweight sensing alternative for accelerating vehicles operating in environments where conventional velocity-aided attitude estimation is infeasible due to the lack of reliable velocity measurements. By exploiting the information contained in barometric altitude measurements and their coupling with the vehicle dynamics, we established a theoretical framework for attitude estimation using only inertial, magnetic, and barometric sensors. This work provides the first unified treatment of barometer-aided attitude estimation encompassing observability analysis, observer design with both LES and AGAS guarantees, and validation on real-flight data.

Within this framework, two observer architectures with complementary convergence and robustness properties were proposed. The AGAS architecture combines a deterministic Riccati-based estimation stage with a nonlinear observer on $\mathrm{SO}(3)$, yielding almost-global asymptotic convergence under a uniform observability condition while preserving the geometric structure of the attitude dynamics. The LES architecture employs a reduced-order nonlinear observer on $\mathrm{SO}(3)\times\mathbb{R}^2$ and achieves local exponential convergence under a relaxed observability condition requiring sufficient excitation only in the horizontal component of the inertial acceleration.

The proposed observers were validated through extensive Monte Carlo simulations and real-flight experiments. The results show that the LES observer provides faster convergence and improved estimation accuracy under limited excitation, whereas the AGAS observer exhibits stronger robustness to large initialization errors while maintaining competitive steady-state performance. These observations are consistent with the theoretical analysis and highlight the tradeoff between convergence rate, excitation requirements, and domain of attraction associated with the two observer designs. Future work will focus on extending the proposed architectures to account for IMU biases and on investigating the resulting observability and stability properties.

\appendices
\section{Proof of Lemma~\ref{Lemma1}}\label{appendix_A}
To show that the system ~\eqref{eq:ltv_G} is uniformly observable, it suffices to show that there exist \( \bar{\delta}, \bar{\mu} > 0 \) such that~\eqref{eq:observability_gramian} holds, i.e. \(W^G(t, t + \bar \delta) \ge \bar \mu I_5,~ \forall t \geq 0,\) with \(W^G(t, t + \bar \delta)\) given by~\eqref{eq:gramian_G}.
Assume, by contradiction, that system~\eqref{eq:ltv_G} is not uniformly observable.
Then, for every \(\bar \mu > 0\) and  \(\bar \delta > 0\), there exists \(t \ge 0\) such that \(W^G(t,t+\bar\delta) < \bar\mu I_5 \), 
Let \(\{\mu_p\}_{p\in\mathbb{N}}\) be a sequence decreasing to zero with \(\mu_p > 0\), and let 
\(\bar{\delta} > 0\) satisfy the PE condition~\eqref{eq:PE_Condition_G}.  
Then, there exist sequences \(\{t_p\} \subset \mathbb{R}_+ \) and \(\{d_p\} \subset \mathbb{S}^4\),  
      such that 
\(
d_p^\top W^G(t_p,t_p+\bar{\delta}) d_p < \mu_p,
\quad \forall p \in \mathbb{N}.
\)
By compactness of \(\mathbb{S}^4\), there exists a subsequence of \(\{d_p\}\) that converges to some \(d \in \mathbb{S}^4\).
Letting \(p \to \infty\) and using \(\mu_p \to 0\) yields the convergence relation from  \eqref{eq:gramian_G}
\begin{equation}
\lim_{p \to \infty} 
\int_0^{\bar{\delta}} 
\big\| C^G\, \Phi^G(s,t_p)\, d_p \big\|^2 ds = 0 \label{eq:conv_relation_G}
\end{equation}
or, by a change of variables with a scalar output function,
\begin{equation}
\lim_{p \to \infty} 
\int_0^{\bar{\delta}} |f_p(s)|^2 ds = 0 \label{eq:conv_relation_f_G}
\end{equation}
where we define
\[
\begin{aligned}
f_p(t) &:= C^G \Phi^G(t+t_p,t_p)d_p \\
&= C^G_1 \phi_{11}(t+t_p,t_p) d_1 + C_1^G \phi_{12}(t+t_p,t_p) d_2,
\end{aligned}
\]
where \(d_p = d = [\,d_1^\top, d_2^\top\,]^\top\) with 
\(d_1 \in \mathbb{R}^2\), \(d_2 \in \mathbb{R}^3\), and $|d_1|^2+|d_2|^2=1$.
From~\eqref{eq:transition_matrix},~\eqref{eq:state_matrix_G}-\eqref{eq:state_trans_matrix_G},~\eqref{eq:conv_relation_G}-\eqref{eq:conv_relation_f_G}, and knowing that \((A^G_{11})^2 = 0_{2\times2}\), we obtain the successive time-derivatives of \(f_p(t)\), as follow :
\[
\begin{aligned}
f_p^{(1)}(t) &= C^G_1 A^G_{11} \phi_{11} d_1 
             + C^G_1 \left( A^G_{11}\phi_{12} + A^G_{12}\phi_{22} \right) d_2, \\
f_p^{(2)}(t) &= C^G_1 \left( A^G_{11} A^G_{12} - A^G_{12}\mathrm{\omega}^\times \right) \phi_{22} d_2
\end{aligned}
\]
Now, using the results of Lemma~A.1 of \cite{Pascal2017},  we deduce:
\[
\lim_{p \to \infty} 
\int_0^{\bar{\delta}} |f_p^{(k)}(s)|^2 ds = 0, 
\quad k=0,1,\dots,2.
\]
The highest derivative yields 
\[f_p^{(2)}(t_p) \to \mathrm{a}^\top(t_p)\big(-\mathrm{\omega}^\times(t_p)\big) d_2 \to 0 ~\text{as } p \to \infty.\]
By the PE assumption in~\eqref{eq:PE_Condition_G}, this implies \(d_2 = 0\).  
Substituting \(d_2 = 0\) into \(f_p(s)\) and \(f_p^{(1)}(s)\) gives 
\begin{equation} 
C^G_1\phi_{11}d_1 = 0,  \quad C^G_1 A_{11}\phi_{11}d_1 = 0, \label{eq:fp_fp_1_G}
\end{equation}
with \(d_1 = \begin{bmatrix}d_{1,1},d_{1,2}\end{bmatrix}^\top,\) where \(d_{1,1} \in \mathbb{R}\) and \(d_{1,2} \in \mathbb{R}.\)
From~\eqref{eq:phi_11} we compute \(C^G_1 \phi_{11}(t,\tau) =  \begin{bmatrix}1 & (t-\tau)\end{bmatrix}\) and \(C^G_1 A^G_{11}\phi_{11}(t,\tau) =  \begin{bmatrix}0 & 1\end{bmatrix}\).
By substituting these into~\eqref{eq:fp_fp_1_G}, we get \(
d_{1,1}+d_{1,2}(t-\tau) = d_{1,2}  = 0\). This implies \(d_1 = 0\).
Therefore \(d = 0\), contradicting \(|d| = 1\).  
Hence, the pair \((A^G(t),C^G)\) is uniformly observable. This in turn guarantees the global exponential stability of the equilibrium \(\tilde{x}^G = \boldsymbol{0}_{5\times 1}\)(See~\cite{Hamel2017PositionMeasurements}).

\section{Proof of Theorem~\ref{Theorem1}}\label{appendix_B}
Differentiating \( \tilde{R}^G \) and using the tilt error \( \tilde{z} := z -\hat z\), we obtain
\begin{align}
\dot{\tilde{R}}^G &=  R\mathrm{\omega}^\times (\hat R^G)^\top + R\left(-\mathrm{\omega}^\times (\hat R^G)^\top + (\hat R^G)^\top \sigma_R^\times \right) \notag \\
&= \tilde{R}^G\left( k_z \mathrm{e}_3 \times \hat{R}^G \hat{z} + k_m \bar{\mathrm{m}}_{\mathcal{I}} \times \hat{R}^G \bar{\mathrm{m}}_{\mathcal{B}} \right)^\times
\notag 
\\&= \tilde{R}^G\big(k_z\mathrm{e}_3 \times \hat{R}^Gz- k_z\mathrm{e}_3 \times \hat{R}^G \tilde{z} +k_m \bar{\mathrm{m}}_{\mathcal{I}} \times \hat{R}^G \bar{\mathrm{m}}_{\mathcal{B}}\big)^\times \notag\end{align}

From Lemma~\ref{Lemma1}, it follows that \( \hat z \to z\) exponentially, which implies that \( \bar{\mathrm{m}}_{\mathcal{B}} \to \bar{\bar{\mathrm{m}}}_{\mathcal{B}} \), with \( \bar{\bar{\mathrm{m}}}_{\mathcal{B}} = \bar{\Pi}_{z} \mathrm{m}_{\mathcal{B}} \). Moreover, one can show that \(\bar{\mathrm{m}}_{\mathcal{B}} = \bar{\bar{\mathrm{m}}}_{\mathcal{B}} + \mathcal{O}(\tilde{z}).\)
Expressing this in terms of \(\tilde x \), and in view of ~\eqref{eq:riccati_observer_G}, one obtains the closed-loop system:
\begin{subequations} \label{eq:closed_loop}
\begin{align}
\dot{\tilde{R}}^G &=  \tilde{R}^G \left(
k_z \mathrm{e}_3 \times \hat{R}^Gz 
+ k_m \bar{\mathrm{m}}_{\mathcal{I}} \times \hat{R}^G \bar{\bar{\mathrm{m}}}_{\mathcal{B}} 
+ \mathcal{O}(\tilde x^G) \right)^\times, \label{eq:closed_loop_att} \\
\dot{\tilde{x}}^G &= (A^G - K^GC^G)\tilde{x}^G. \label{eq:closed_loop_trans}
\end{align}
\end{subequations}
The above system can be seen as a cascade interconnection of a non-linear system on \(\mathrm{SO}(3)\)~\eqref{eq:closed_loop_att} and the LTV system on \(\mathbb{R}^5\)~\eqref{eq:closed_loop_trans}. To prove the AGAS of the interconnection system, we begin by proving that subsystem~\eqref{eq:closed_loop_att} is AGAS for \(\tilde{x}^G = 0_{5\times1}\).
From~\eqref{eq:closed_loop_att}, it follows, as shown in~\cite{mahony2008nonlinear}, that the equilibrium sets are 
\(\mathcal{E}_s = \{I_3\}\) and \(\mathcal{E}_u = \left\{(U\Lambda U^\top, 0)\,\middle|\, \Lambda = \operatorname{diag}(1, -1, -1),\, U \in \mathrm{SO}(3) \right\}.\)
The singleton set \(\mathcal{E}_s\) is the stable equilibrium, and the set \(\mathcal{E}_u\) is the set of unstable equilibria (see~\cite[Th.~6.1]{VanGoor2025}). It consists of all 180-degree rotations, each defined by an axis on \(S^2\), which corresponds to a 2D space embedded in the 3D manifold \(\mathrm{SO}(3) \), and thus has measure zero in \(\mathrm{SO}(3) \). It follows that the stable equilibrium \(\tilde{R}^G = I_3\) is almost globally asymptotically stable for subsystem~\eqref{eq:closed_loop_att}. To complete the proof, we now examine the full interconnection
system. Since the estimation error \(\tilde x^G\) in~\eqref{eq:closed_loop_trans} evolves independently of \(\tilde{R}^G\) and is GES from Lemma~\ref{Lemma1}, there exist constants \(\delta, \beta > 0\) such that \(\tilde x^G\) satisfies \(
|\tilde x^G(t)| \leq \delta \exp(-\beta t)\, |\tilde x^G(0)|, \forall t \geq 0.
\)
Thus, \(\tilde x^G\) remains uniformly bounded, meaning there exists a compact set \(S \subset \mathbb{R}^5\) such that \(\tilde x^G(t) \in S\) for all \(t \geq 0\). Therefore, according to~\cite[Proposition~2]{Angeli2010}, one can conclude that subsystem~\eqref{eq:closed_loop_att} is almost globally Input-to-State Stable (ISS) with respect to \(\tilde{R}^G = I_3\) and input \(\tilde x^G\). Hence, given that \(\tilde x^G = 0_{5\times 1}\) for system~\eqref{eq:closed_loop_trans} is GES and that subsystem~\eqref{eq:closed_loop_att} with \(\tilde x^G = 0_{5\times 1}\) is AGAS at \(\tilde{R}^G = I_3\) and almost globally ISS with respect to \(\tilde x^G\), it follows from~\cite[Th.~2]{Angeli2004} that the cascaded interconnection system~\eqref{eq:closed_loop} is AGAS at \((\tilde{R}^G, \tilde x^G) = (I_3,0_{5\times 1})\).

\section{Proof of Lemma~\ref{Lemma2}} \label{appendix_C}
To establish the uniform observability, we evaluate the observability Gramian in a convenient block form. To this end, the system matrix $A^{L\star}(t)$ is partitioned as
\begin{equation}
A^{L\star}(t) =\begin{bmatrix}A^L_{11} &A^{L\star}_{12}(t) \\
 0_{3\times 2} &  0_{3\times 3}
\end{bmatrix}, \label{eq:block_true_state_matrix}
\end{equation}
with
\[
A^L_{11} = 
\begin{bmatrix}
0 & 1 \\
0 &  0
\end{bmatrix}, \quad
A^{L\star}_{12}(t) = \begin{bmatrix}0_{1\times 3}\\ -\mathrm{e}_{3}^{\top}(R \mathrm{a})^\times \end{bmatrix}.
\]
Similarly, the output matrix $C$ is partitioned as
\begin{equation}
C^L = \begin{bmatrix} C^L_{11} & 0_{1\times 3}\\
0_{3\times 2} & C^L_{22}
\end{bmatrix}, \label{eq:block_true_output_matrix}
\end{equation}
with
\[C^L_{11} = \begin{bmatrix}1&0\end{bmatrix}, \quad
C^L_{22}=-(\mathrm{m}_{\mathcal{I}})^{\times}.
\]
Due to the structure of \(A^{L\star}\) in~\eqref{eq:block_true_state_matrix}, the associated state transition matrix admits the form:
\begin{equation}
\Phi^{L\star}(t,\tau) =
\begin{bmatrix} 
\Phi_{11}(t,\tau) &\Phi^{\star}_{12}(t,\tau)\\
0_{3\times 2} & I_3
\end{bmatrix},\label{eq:true_state_trans_matrix}
\end{equation}
with \(\Phi_{11}\in\mathbb{R}^{2 \times 2}\) and \( \Phi^{\star}_{12}\in\mathbb{R}^{2\times3}\).
Substituting~\eqref{eq:block_true_state_matrix} and \eqref{eq:true_state_trans_matrix} into the state transition equation~\eqref{eq:transition_matrix} yields
\begin{equation}
\begin{aligned}
\frac{d}{dt} \Phi^{L\star}(t, \tau)
&=\begin{bmatrix}
A_{11}\Phi_{11} & A_{11}\Phi^{\star}_{12}+A^{\star}_{12}  \\
0_{3\times 2} & 0_{3\times 3}\\
\end{bmatrix}, \Phi^{L\star}(\tau,\tau) = I_5, \label{eq:deriv_true_trans_matrix}
\end{aligned}
\end{equation}
with initial conditions \(\Phi_{11}(\tau,\tau) = I_2\) and \(\Phi^{\star}_{12}(\tau,\tau) = 0_{2 \times 3}\).
This directly leads to the subsystem equations 
\begin{subequations}
\begin{align}
\dot \Phi_{11} &= A^L_{11}\Phi_{11} \label{eq:dot_phi11},\\
\dot \Phi^{\star}_{12} &= A^L_{11}\Phi^{\star}_{12}+A^{L\star}_{12} \label{eq:dot_phi12}.
\end{align}
\end{subequations}
Since \(A^L_{11}\) is a constant matrix, the solution of~\eqref{eq:dot_phi11} is given by 
\begin{equation}
\Phi_{11}(t,\tau) = \exp\left(A^L_{11}(t-\tau)\right) = \begin{bmatrix} 1 & (t-\tau) \\ 0 & 1 \end{bmatrix}\label{eq:Phi_11}.
\end{equation}
The dynamics in~\eqref{eq:dot_phi12} define a linear time varying system with constant state matrix $A^L_{11}$. Using the integral representation of the solution, one obtains
\begin{equation}
\Phi^{\star}_{12}(t,\tau) = \Phi_{11}(t,\tau) \Phi^{\star}_{12} (\tau,\tau)
+\int^{t}_{\tau}\Phi_{11}(t,s) A^{L\star}_{12}(s)\ ds. \label{eq:eq_ODE_Phi_12_a}
\end{equation}
Using \(\Phi^{\star}_{12}(\tau,\tau) = 0_{3 \times 3}\), \eqref{eq:eq_ODE_Phi_12_a} reduces to
\begin{equation}
\Phi^{\star}_{12}(t,\tau) = \int^{t}_{\tau}\Phi_{11}(t,s)A^{L\star}_{12}(s)\, ds. \label{eq:eq_ODE_Phi_12_b}
\end{equation}
Substituting \(A^{L\star}_{12}\) and~\eqref{eq:Phi_11} in~\eqref{eq:eq_ODE_Phi_12_b} yields:
\begin{equation}
\Phi^{\star}_{12}(t,\tau) =\begin{bmatrix}
-\int_{\tau}^{t}\left(t-s\right)\mathrm{e}_3^{\top}(R\mathrm{a})^{\times}\,ds \\
-\int_{\tau}^{t}-\mathrm{e}_3^{\top}(R\mathrm{a})^{\times}\,ds
\end{bmatrix}. \label{eq:Phi_12} 
\end{equation}
Next, from~\eqref{eq:block_true_output_matrix} and \eqref{eq:true_state_trans_matrix}, \(C\Phi^{L \star}\) can be written as
\begin{equation}
C^L\Phi^{L\star}(s,t)=
\begin{bmatrix}
C^L_{11}\Phi_{11}(s,t) & C^L_{11}\Phi^{\star}_{12}(s,t)\\
0_{3\times 2} & C^L_{22}
\end{bmatrix}\label{eq:C_Phi}
\end{equation}
Substituting~\eqref{eq:C_Phi} into the observability Gramian definition~\eqref{eq:W}, one obtains 
\begin{equation}
W^{L\star}(t,t+\tau)=
\begin{bmatrix}
W^L_{11}(t,t+\tau) & W_{12}^{L\star}(t,t+\tau)\\
W_{12}^{L\star \top}(t,t+\tau) & W_{22}^{L\star}(t,t+\tau)
\end{bmatrix} \label{eq:gramian}
\end{equation}
where 
\[
W^L_{11}(t, t+\tau) = \frac{1}{\tau}\int_t^{t+\tau}
\Phi_{11}^{\top}(s,t) (C^L_{11})^{\top} C^L_{11}\Phi_{11}(s,t)\,ds,   
\]
\[
W^{L\star}_{12}(t,t+\tau) = \frac{1}{\tau}\int_t^{t+\tau}\Phi_{11}^{\top}(s,t) (C^L_{11})^{\top} C^L_{11}\Phi^{\star}_{12}(s,t)\,ds,
\]
and
\[
\begin{aligned}
W^{L\star}_{22}(t,t+\tau) &= \frac{1}{\tau}\int_t^{t+\tau}\Phi^{\star \top}_{12}(s,t) (C^L_{11})^{\top} C^L_{11}\Phi^{\star}_{12}(s,t)\,ds \\
&+\frac{1}{\tau}\int_t^{t+\tau}(C^L_{22})^{\top}C_{22}\,ds.
\end{aligned}
\]

To show that there exist $\bar \delta,\bar \mu >0$ such that \(W^{L\star}(t, t + \bar \delta) \ge \bar \mu I_5,\, \forall t \geq 0,\) we proceed by contradiction following the same arguments as in the proof of Lemma~\ref{Lemma1} given in Appendix~\ref{appendix_A}.
Letting \(p \to \infty\) and using \(\mu_p \to 0\) yields the convergence relation from  \eqref{eq:gramian}
\begin{equation}
\lim_{p \to \infty} 
\int_0^{\bar{\delta}} 
\big\| C^L\, \Phi^{L\star}(s,t_p)\, d_p \big\|^2 ds= 0 \label{eq:conv_relation_L}
\end{equation}
or, by a change of variables with an output function,
\begin{equation}
\lim_{p \to \infty} 
\int_0^{\bar{\delta}} \big|f_p(s)\big|^2 ds = 0, \label{eq:conv_relation_f_L}
\end{equation}
where we define
\[
\begin{aligned}
f_p(t) &:= C^L \Phi^{L \star}(t+t_p,t_p) d_p\\
&= \begin{bmatrix}C^L_{11} \Phi_{11}(t+t_p,t_p) d_1+C_{11}\Phi^{\star}_{12}(t+t_p,t_p)d_2\\C^L_{22}d_2
\end{bmatrix} \\
&= \begin{bmatrix} f_{p,1}(t)\\f_{p,2}(t)\end{bmatrix}.
\end{aligned}
\] 
This implies
\[
\begin{aligned}
|f_p(t)|^2 &= \big| C^L_{11}\Phi_{11}d_1 +C^L_{11}\Phi^{\star}_{12}d_2\big|^2 + \big| C^L_{22}d_2 \big|^2\\
&=\big|f_{p,1}(t)\big|^2 + \big|f_{p,2}(t)\big|^2.
\end{aligned}
\]
Hence the convergence relation in~\eqref{eq:conv_relation_f_L} becomes
\begin{subequations}
\begin{align}
&\lim_{p \to \infty} 
\int_0^{\bar{\delta}} 
\big|f_{p,1}(s) \big|^2 ds = 0, \text{ and} \label{eq:conv_baro}\\
&\lim_{p \to \infty} 
\int_0^{\bar{\delta}} 
\big|f_{p,2}(s) \big|^2 ds = 0. \label{eq:conv_magneto}
\end{align}
\end{subequations}
From~\eqref{eq:block_true_state_matrix}-\eqref{eq:Phi_12}, we obtain the successive time derivatives of \(f_{p,1}(t)\), as follow :
\[
\begin{aligned}
f_{p,1}^{(1)}(t) &= C^L_{11} A^L_{11} \Phi_{11} d_1 
             + C^L_{11}\left( A^L_{11}\Phi^{\star}_{12} + A^{L\star}_{12} \right) d_2,\\
f_{p,1}^{(2)}(t) &=C^L_{11}\left(A^L_{11}A^{L\star}_{12}+\dot A^{L\star}_{12}\right)d_2 = -\mathbf{e}^{\top}_3(R\mathrm{a})^{\times}d_2.
\end{aligned}
\]
Using the results of Lemma~A.1 of \cite{Pascal2017},  we deduce:
\[
\lim_{p \to \infty} 
\int_0^{\bar{\delta}} |f_{p,1}^{(k)}(s)|^2 ds = 0, 
\quad k=0,1,2..
\]
Letting \(d_2 = \begin{bmatrix}d^{\top}_{2,1},d_{2,2}\end{bmatrix}^\top\), with \(d_{2,1} \in \mathbb{R}^2\) and \(d_{2,2} \in \mathbb{R}\), the highest derivative in the tangent-space yields
\[f_{p,1}^{(2)}(t_p) \to -\mathbf{e}^{\top}_3\left( R(t_p) \mathrm{a}(t_p)\right)^{\times}Jd_{2,1} \to 0 \text{ as }  p \to \infty.\] By the PE assumption in~\eqref{eq:PE_condition}, this leads to \(d_{2,1} = 0\), which implies \(d_2 = \begin{bmatrix}0_{1 \times 2} ,d_{2,2}\end{bmatrix}^\top.\) Substituting \(d_2 = d_{2,2}\mathrm{e}_3\) into $f_{p,2}(t_p)$, we get $d_{2,2}(\mathrm{m}_\mathcal{I}\times \mathrm{e}_3) \to 0 \text{ as }  p \to \infty.$
By assumption, vectors $\mathrm{m}_\mathcal{I}$ and $\mathrm{e}_3$ are non-collinear, then $d_{2,2}\to 0 \text{ as }  p \to \infty.$ This implies $d_2 = 0.$
Now, substituting $d_2 =0$ into $f_{p,1}(t_p)$ and $f_{p,1}^{(1)}(t_p)$, we get 
\begin{equation} 
C^L_{11}\Phi_{11}d_1 = 0,  \quad C^L_{11} A^L_{11} \Phi_{11} d_1 = 0, \label{eq:fp_fp_1}
\end{equation}
with \(d_1 = \begin{bmatrix}d_{1,1},d_{1,2}\end{bmatrix}^\top,\) where \(d_{1,1} \in \mathbb{R}\) and \(d_{1,2} \in \mathbb{R}.\)
From~\eqref{eq:Phi_11} we have \(C^L_{11} \Phi_{11}(t,\tau) =  \begin{bmatrix}1 & (t-\tau)\end{bmatrix}\) and \(C^L_{11}A^L_{11}\Phi_{11}(t,\tau) =  \begin{bmatrix}0 & 1\end{bmatrix}.\)
By substituting these into~\eqref{eq:fp_fp_1}, we get $d_{1,2} \to 0$ and \(
d_{1,1}+d_{1,2}(t_p-\tau) \to  d_{1,1}  \to  0 \text{ as }  p \to \infty.\) This implies \(d_1 = 0\). Hence $d = 0$, contradicting $|d| = 1$. Therefore, the pair $(A^{L\star}(t),C)$ is uniformly observable.

\section*{Acknowledgment}
This work was supported by the "Grands Fonds Marins" Project Deep-C, and the ASTRID ANR project ASCAR. This research work is also supported in part by NSERC-DG RGPIN-2020-04759 and Fonds de recherche du Québec (FRQ).

\bibliographystyle{IEEEtran}
\bibliography{references}

@inproceedings{roberts2011attitude,
  title={On the Attitude Estimation of Accelerating Rigid-Bodies Using {GPS} and {IMU} Measurements},
  author={Roberts, Andrew and Tayebi, Abdelhamid},
  booktitle={50th IEEE Conference on Decision and Control and European Control Conference},
  pages={8088--8093},
  year={2011},
  organization={IEEE}
}

@article{hua2010attitude,
  title={Attitude Estimation for Accelerated Vehicles Using {GPS/INS} Measurements},
  author={Hua, Minh-Duc},
  journal={Control Engineering Practice},
  volume={18},
  number={7},
  pages={723--732},
  year={2010},
  publisher={Elsevier}
}

@inproceedings{berkane2017attitude,
  title={Attitude and Gyro Bias Estimation Using {GPS} and {IMU} Measurements},
  author={Berkane, Soulaimane and Tayebi, Abdelhamid},
  booktitle={56th Annual Conference on Decision and Control (CDC)},
  pages={2402--2407},
  year={2017},
  organization={IEEE}
}

@article{Hamel2017PositionMeasurements,
    title = {Position Estimation from Direction or Range Measurements},
    year = {2017},
    journal = {Automatica},
    author = {Hamel, Tarek and Samson, Claude},
    pages = {137--144},
    volume = {82},
    publisher = {Elsevier Ltd},
    doi = {10.1016/j.automatica.2017.04.045},
    issn = {00051098}
}

@article{hamel2017riccati,
  title={Riccati observers for the nonstationary PnP problem},
  author={Hamel, Tarek and Samson, Claude},
  journal={IEEE Transactions on Automatic Control},
  volume={63},
  number={3},
  pages={726--741},
  year={2017},
  publisher={IEEE}
}

@article{2008_bonnabel_SymmetryPreservingObservers,
  title = {Symmetry-Preserving Observers},
  author = {Bonnabel, Silv{\`E}re and Martin, Philippe and Rouchon, Pierre},
  year = {2008},
  month = dec,
  journal = {IEEE Transactions on Automatic Control},
  volume = {53},
  number = {11},
  pages = {2514--2526},
  issn = {1558-2523},
  doi = {10.1109/TAC.2008.2006929},
  }

@article{2008_martin_InvariantObserverEarthVelocityAided,
  title = {An Invariant Observer for Earth-Velocity-Aided Attitude Heading Reference Systems},
  author = {Martin, Philippe and Sala{\"u}n, Erwan},
  year = {2008},
  month = jan,
  journal = {IFAC Proceedings Volumes},
  series = {17th {{IFAC World Congress}}},
  volume = {41},
  number = {2},
  pages = {9857--9864},
  issn = {1474-6670},
  doi = {10.3182/20080706-5-KR-1001.01668},
  langid = {english},
}

@article{2016_hua_StabilityAnalysisVelocityaided,
  title = {Stability Analysis of Velocity-Aided Attitude Observers for Accelerated Vehicles},
  author = {Hua, Minh-Duc and Martin, Philippe and Hamel, Tarek},
  year = {2016},
  month = jan,
  journal = {Automatica},
  volume = {63},
  pages = {11--15},
  issn = {0005-1098},
  doi = {10.1016/j.automatica.2015.10.014},
  langid = {english}
}

@inproceedings{2013_troni_PreliminaryExperimentalEvaluation,
  title = {Preliminary Experimental Evaluation of a {Doppler}-Aided Attitude Estimator for Improved Doppler Navigation of Underwater Vehicles},
  booktitle = {{{IEEE International Conference}} on {{Robotics}} and {{Automation}}},
  author = {Troni, Giancarlo and Whitcomb, Louis L.},
  year = {2013},
  month = may,
  pages = {4134--4140},
  issn = {1050-4729},
  doi = {10.1109/ICRA.2013.6631160}
}

@article{Pieter2023,
title = {Constructive Equivariant Observer Design for Inertial Velocity-Aided Attitude},
journal = {IFAC-PapersOnLine},
volume = {56},
number = {1},
pages = {349-354},
year = {2023},
note = {12th IFAC Symposium on Nonlinear Control Systems NOLCOS 2022},
issn = {2405-8963},
author = {Pieter {van Goor} and Tarek Hamel and Robert Mahony},
}

@ARTICLE{benallegue2023velocity,

  author={Benallegue, Mehdi and Benallegue, Abdelaziz and Cisneros, Rafael and Chitour, Yacine},

  journal={IEEE Transactions on Automatic Control}, 

  title={Velocity-Aided {IMU}-Based Tilt and Attitude Estimation}, 

  year={2023},

  volume={68},

  number={10},

  pages={5823-5836},

  doi={10.1109/TAC.2022.3225758}}

@INPROCEEDINGS{wang2021nonlinear,

  author={Wang, Miaomiao and Tayebi, Abdelhamid},

  booktitle={60th IEEE Conference on Decision and Control (CDC)}, 

  title={Nonlinear Attitude Estimation Using Intermittent Linear Velocity and Vector Measurements}, 

  year={2021},

  volume={},

  number={},

  pages={4707-4712},

  doi={10.1109/CDC45484.2021.9683280}}

@article{oliveira2024pitot,
title = {Pitot Tube Measure-Aided Air Velocity and Attitude Estimation in {GNSS} Denied Environment},
journal = {European Journal of Control},
volume = {80},
pages = {101070},
year = {2024},
note = {2024 European Control Conference Special Issue},
issn = {0947-3580},
author = {Tomas L. {de Oliveira} and Pieter {van Goor} and Tarek Hamel and Robert Mahony and Claude Samson}
}

@ARTICLE{hua2013implementation,

  author={Hua, Minh-Duc and Ducard, Guillaume and Hamel, Tarek and Mahony, Robert and Rudin, Konrad},

  journal={IEEE Transactions on Control Systems Technology}, 

  title={Implementation of a Nonlinear Attitude Estimator for Aerial Robotic Vehicles}, 

  year={2014},

  volume={22},

  number={1},

  pages={201-213},

  doi={10.1109/TCST.2013.2251635}}

@ARTICLE{Bryne2017,

  author={Bryne, Torleiv H. and Hansen, Jakob M. and Rogne, Robert H. and Sokolova, Nadezda and Fossen, Thor I. and Johansen, Tor A.},

  journal={IEEE Control Systems Magazine}, 

  title={Nonlinear Observers for Integrated {INS$\backslash$/GNSS} Navigation: Implementation Aspects}, 

  year={2017},

  volume={37},

  number={3},

  pages={59-86},

  doi={10.1109/MCS.2017.2674458}}

@Inbook{Besancon2007,
author="Besan{\c{c}}on, Gildas",
title={An overview on observer tools for nonlinear systems},
bookTitle="Nonlinear Observers and Applications",
year="2007",
publisher="Springer Berlin Heidelberg",
address="Berlin, Heidelberg",
pages="1--33",
isbn="978-3-540-73503-8"
}

@InProceedings{Pascal2017,
author="Morin, Pascal
and Eudes, Alexandre
and Scandaroli, Glauco",
editor="Nielsen, Frank
and Barbaresco, Fr{\'e}d{\'e}ric",
title={Uniform Observability of Linear Time-Varying Systems and Application to Robotics Problems},
booktitle="Geometric Science of Information",
year="2017",
publisher="Springer International Publishing",
address="Cham",
pages="336--344",
}

@book{Ma2004rodriguesformula,
  author    = {Yi Ma and Stefano Soatto and Jana Ko{\v{s}}eck{\'a} and S. Shankar Sastry},
  title     = {An Invitation to {3-D} Vision: From Images to Geometric Models},
  publisher = {Springer},
  address   = {New York},
  year      = {2004},
  volume    = {26},
  series    = {Interdisciplinary Applied Mathematics},
  pages     = {27--29},
  doi       = {10.1007/b97515},
  isbn      = {978-0-387-00893-6},
  note      = {Rodrigues' formula derivation in Chapter 2}
}

@article{mahony2008nonlinear,
  author    = {Robert Mahony and Tarek Hamel and Jean-Michel Pflimlin},
  title     = {Nonlinear Complementary Filters on the Special Orthogonal Group},
  journal   = {IEEE Transactions on Automatic Control},
  volume    = {53},
  number    = {5},
  pages     = {1203--1218},
  year      = {2008},
  month     = {June},
  doi       = {10.1109/TAC.2008.923738},
  issn      = {1558-2523}
}

@article{VanGoor2025,
  author  = {Pieter Van Goor and Tarek Hamel and Robert Mahony},
  title   = {Synchronous Observer Design for Inertial Navigation Systems with Almost-Global Convergence},
  journal = {Automatica},
  year    = {2025},
  volume  = {177},
  pages   = {112328},
  doi     = {10.1016/j.automatica.2024.112328}
}

@article{Angeli2010,
  author  = {D. Angeli and L. Praly},
  title   = {Stability Robustness in the Presence of Exponentially Unstable Isolated Equilibria},
  journal = {IEEE Transactions on Automatic Control},
  volume  = {56},
  number  = {7},
  pages   = {1582--1592},
  year    = {2010},
  doi     = {10.1109/TAC.2010.2048471},
  issn    = {0018-9286},
  publisher = {IEEE}
}

@article{Angeli2004,
  author    = {D. Angeli},
  title     = {An Almost Global Notion of Input-to-State Stability},
  journal   = {IEEE Transactions on Automatic Control},
  volume    = {49},
  number    = {6},
  pages     = {866--874},
  year      = {2004},
  doi       = {10.1109/TAC.2004.829603},
  issn      = {0018-9286},
  publisher = {IEEE}
}

@unpublished{tchonkeu2025barometer,
    author={Méloné Nyoba Tchonkeu and Soulaimane Berkane and Tarek Hamel},
    title={Barometer-Aided Attitude Estimation}, 
    year={2025},
    eprint={2509.13649},
    archivePrefix={arXiv},
    primaryClass={cs.RO},
    url={https://arxiv.org/abs/2509.13649},
    note = {Accepted at 2026 IEEE American Control Conference (ACC)}
}

@unpublished{tchonkeu2026pitot,
    author={Melone Nyoba Tchonkeu and Soulaimane Berkane and Tarek Hamel},
    title={Pitot-Aided Attitude and Air Velocity Estimation with Almost Global Asymptotic Stability Guarantees}, 
    year={2026},
    eprint={2512.06133},
    archivePrefix={arXiv},
    primaryClass={eess.SY},
    url={https://doi.org/10.48550/arXiv.2602.08273},
    note = {Accepted at IEEE Conference on Control Technology and Applications (CCTA) 2026}
}

@inproceedings{meier2015px4,
  title={{PX4}: A Node-Based Multithreaded Open Source Robotics Framework for Deeply Embedded Platforms},
  author={Meier, Lorenz and Honegger, Dominik and Pollefeys, Marc},
  booktitle={2015 IEEE International Conference on Robotics and Automation (ICRA)},
  pages={6235--6240},
  year={2015},
  organization={IEEE}
}

@misc{px4_ecl_ekf_tuning,
    title ={Using {PX4}'s Navigation Filter {(EKF2)} - {PX4} User Guide (Main)},
    url={https://docs.px4.io/main/en/advanced_config/tuning_the_ecl_ekf},
    note = {Accessed: 2026-07-08}
}

@article{grip2011attitude,
  title={Attitude estimation using biased gyro and vector measurements with time-varying reference vectors},
  author={Grip, H{\aa}vard Fj{\ae}r and Fossen, Thor I and Johansen, Tor A and Saberi, Ali},
  journal={IEEE Transactions on automatic control},
  volume={57},
  number={5},
  pages={1332--1338},
  year={2011},
  publisher={IEEE}
}

@article{hansen2017nonlinear,
  title={Nonlinear observer design for GNSS-aided inertial navigation systems with time-delayed GNSS measurements},
  author={Hansen, Johan and Fossen, Thor I. and Johansen, Tor A.},
  journal={Control Engineering Practice},
  volume={61},
  pages={188--197},
  year={2017},
  publisher={Elsevier}
}

@inproceedings{bouazza2025observer,
  author={Bouazza, Tarek and Berkane, Soulaimane and Hua, Minh-Duc and Hamel, Tarek},
  booktitle={2025 IEEE 64th Conference on Decision and Control (CDC)}, 
  title={Observer Design for Optical Flow-Based Visual-Inertial Odometry with Almost-Global Convergence}, 
  year={2025},
  volume={},
  number={},
  pages={7419-7424},
  doi={10.1109/CDC57313.2025.11312849}}

@inproceedings{mourikis2007multi,
  title={A multi-state constraint Kalman filter for vision-aided inertial navigation},
  author={Mourikis, Anastasios I and Roumeliotis, Stergios I},
  booktitle={Proceedings 2007 IEEE international conference on robotics and automation},
  pages={3565--3572},
  year={2007},
  organization={IEEE}
}

@inproceedings{bloesch2015robust,
  title={Robust visual inertial odometry using a direct EKF-based approach},
  author={Bloesch, Michael and Omari, Sammy and Hutter, Marco and Siegwart, Roland},
  booktitle={2015 IEEE/RSJ international conference on intelligent robots and systems (IROS)},
  pages={298--304},
  year={2015},
  organization={IEEE}
}

@inproceedings{koditschek1988application,
  title={Application of a new Lyapunov function to global adaptive attitude tracking},
  author={Koditschek, Daniel E},
  booktitle={Proceedings of the 27th IEEE Conference on Decision and Control},
  pages={63--68},
  year={1988},
  organization={IEEE}
}

\begin{IEEEbiography}[{\includegraphics[width=1in,height=1.25in,clip,keepaspectratio]{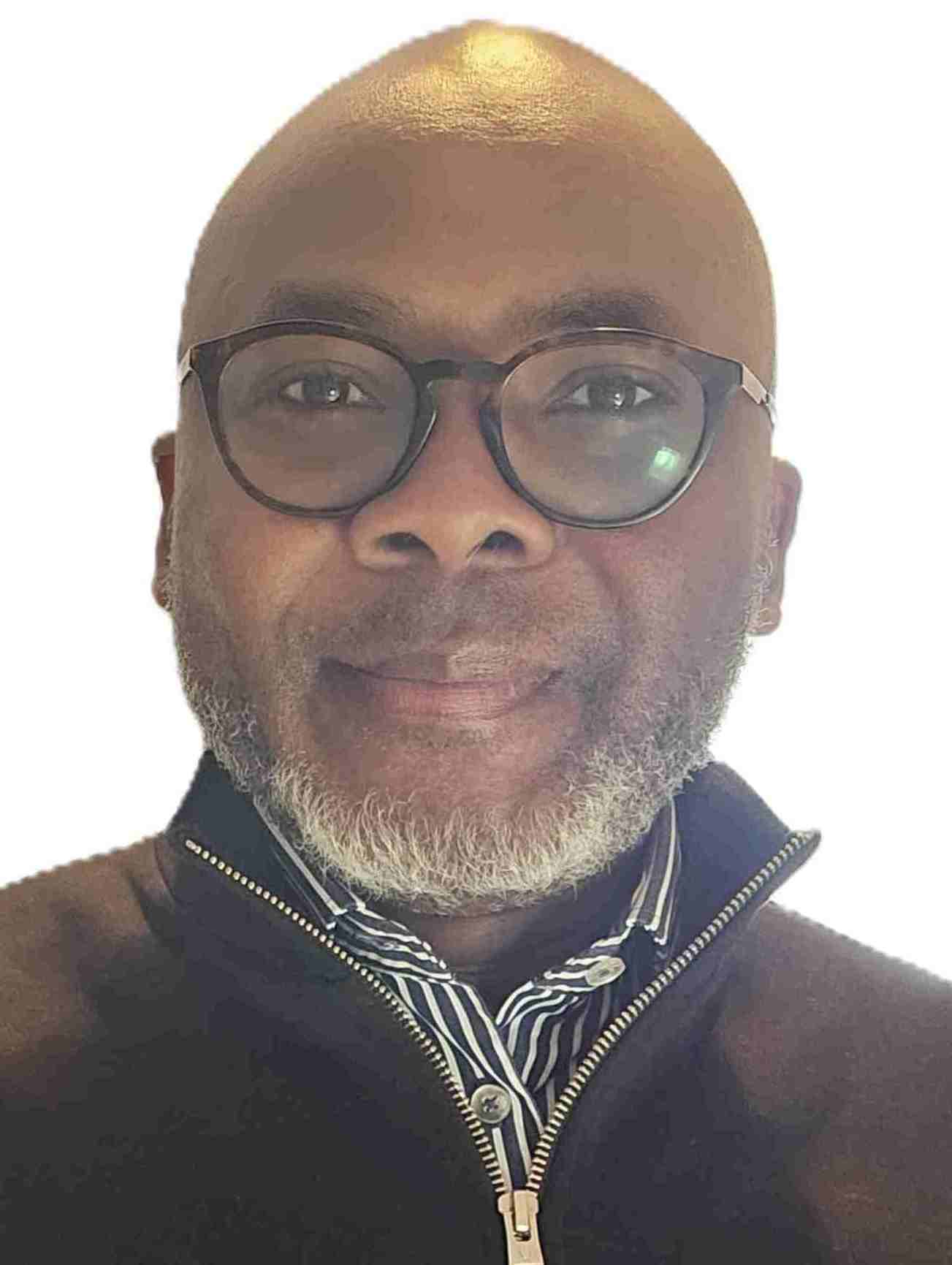}}]
{Méloné Nyoba Tchonkeu}
received his Engineer degree in Electrical Engineering from the University of Applied Sciences Western Switzerland, Switzerland, in 2008,  his M.Eng. degree in Aerospace Engineering (Avionics and Control Systems) from Polytechnique Montreal, Canada, in 2016, and his M.A.Sc. degree in Electrical Engineering from the University of Quebec in Outaouais, Canada, in 2025. He is currently pursing his Ph.D. degree in Science and Information Technology at the University of Quebec in Outaouais.
Concurrently with his academic training, he has held several project and systems engineering positions in the aerospace, automotive, and public sectors. His professional experience includes roles with the Canadian Space Agency and the Department of National Defence, where he currently serves as a Program Lead. 
He is a licensed Professional Engineer (P.Eng.) in the Province of Québec. His research interests are in the areas of nonlinear control theory with applications to unmanned robotic and intelligent autonomous systems.   
\end{IEEEbiography}

\begin{IEEEbiography}[{\includegraphics[width=1in,height=1.25in,clip,keepaspectratio]{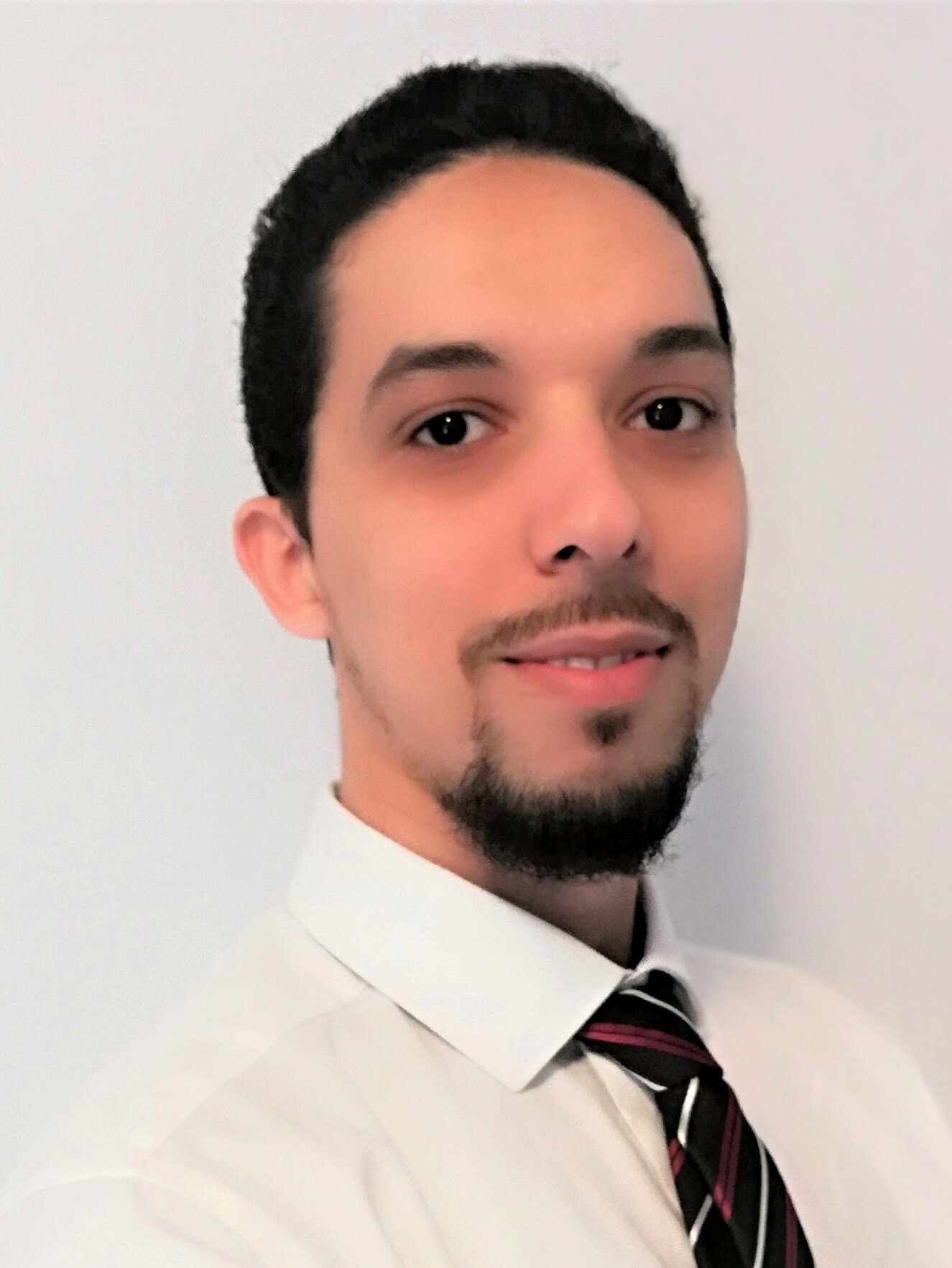}}]
{Soulaimane Berkane}
received his Engineering and M.Sc. degrees in Automatic Control from Ecole Nationale Polytechnique, Algeria, in 2013, and his PhD in Electrical Engineering from the University of Western Ontario, Canada, in 2017. He held postdoctoral positions at the University of Western Ontario, Canada, and at KTH Royal Institute of Technology, Sweden, between 2018 and 2019. 
He is currently an Associate Professor at the Department of Computer Science and Engineering, University of Quebec in Outaouais, Canada. He is a Senior Member of IEEE and a Professional Engineer (P.Eng.) in Ontario. He serves as an Associate Editor for the IEEE CSS Conference Editorial Board. His research interests are in the area of nonlinear control theory with applications to robotics and autonomous systems.
\end{IEEEbiography}

\begin{IEEEbiography}[{\includegraphics[width=1in,height=1.25in,clip,keepaspectratio]{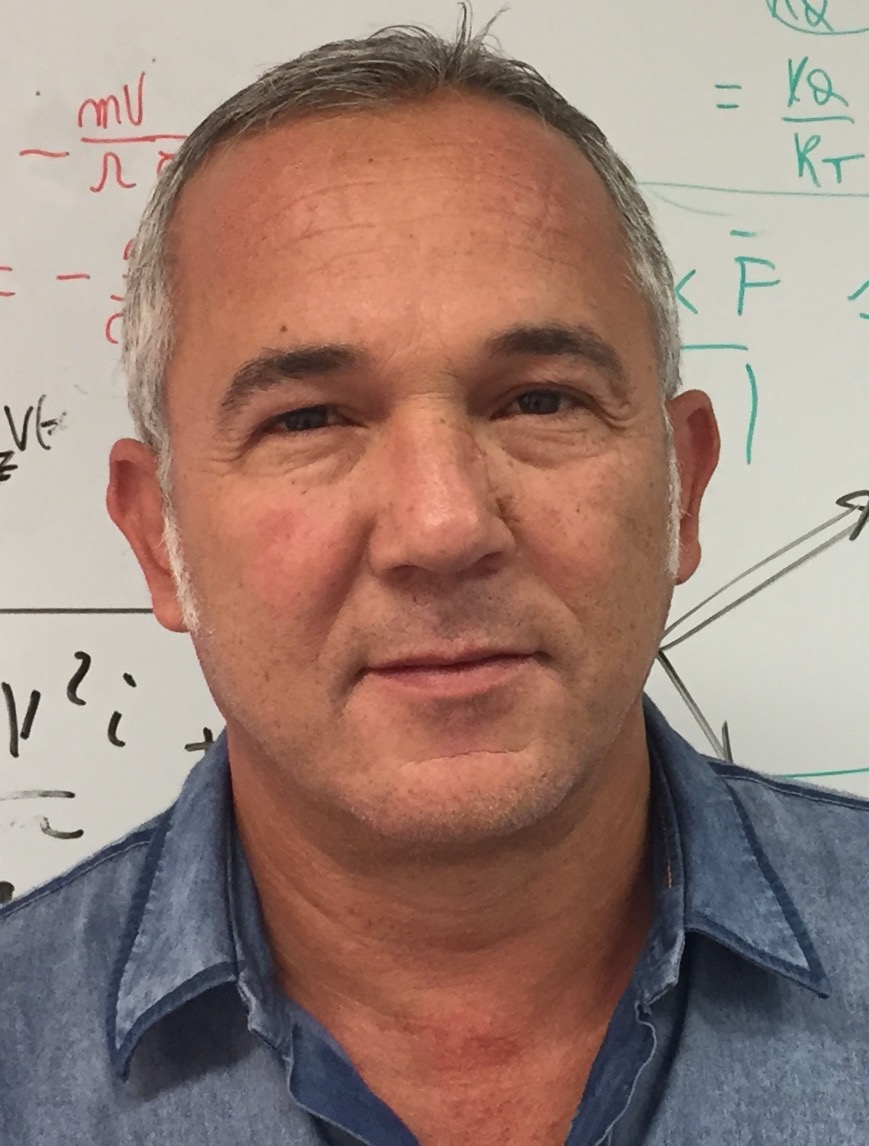}}]
{Tarek Hamel}
has been a Professor at the University Côte d'Azur since 2003. He received his Ph.D. in Robotics from the University of Technology of Compiègne (UTC), France, in 1996. After two years as a research assistant at UTC, he joined the Centre d'Études de Mécanique
d'Île-de-France in 1997 as an Associate Professor. His research
interests encompass nonlinear control theory, estimation, and
vision-based control, with a particular focus on applications to
unmanned robotic systems. Prof. HAMEL is an IEEE Fellow and a senior
member of the Institut Universitaire de France. He has served as an
Associate Editor for IEEE Transactions on Robotics, IEEE Transactions on Control Systems Technology, and Control Engineering Practice.
\end{IEEEbiography}

\vfill

\end{document}